\newif\ifdraft \draftfalse
\newif\iffull \fulltrue
\@nx\else[{#1}]\fi}
\definecolor{DarkGreen}{rgb}{0.1,0.5,0.1}
\definecolor{DarkRed}{rgb}{0.5,0.1,0.1}
\definecolor{DarkBlue}{rgb}{0.1,0.1,0.5}
\newcommand{\cj}[1]{\ifdraft \textcolor{purple}{[Chris: #1]}\fi}
\newcommand{\ar}[1]{\ifdraft \textcolor{red}{[Aaron: #1]}\fi}
\newcommand{\mk}[1]{\ifdraft \textcolor{orange}{[Michael: #1]}\fi}
\newcommand{\algo}{\bm{L}}
\newcommand{\E}{\mathop{\mathbb{E}}}
\newcommand{\cO}{\bm{O}}
\newcommand{\DE}{\mathbf{DistanceEstimator}}
\newtheorem{definition}{Definition}
\newtheorem{corollary}{Corollary}
\newtheorem{lemma}{Lemma}
\newtheorem{theorem}{Theorem}
\newtheorem{remark}{Remark}
\title{Online Learning with an Unknown Fairness Metric}
\author{Stephen Gillen\thanks{Department of Mathematics, University of Pennsylvania.} \and Christopher Jung\thanks{Department of Computer and Information Sciences, University of Pennsylvania. Supported in part by a grant from the Quattrone Center for the Fair Administration of Justice.} \and Michael Kearns\thanks{Department of Computer and Information Sciences, University of Pennsylvania.} \and Aaron Roth\thanks{Department of Computer and Information Sciences, University of Pennsylvania. Supported in part by grants from the DARPA Brandeis project, the Sloan Foundation, and NSF grants CNS-1513694 and CNS-1253345.}}
\begin{document}

\maketitle

\begin{abstract}
We consider the problem of online learning in the linear contextual bandits setting,
but in which there are also strong {\em individual fairness\/} constraints governed by an unknown
similarity metric. These constraints demand that we select similar actions or individuals
with approximately equal probability \citep{DHPRZ12}, which may be at odds with optimizing reward,
thus modeling settings where profit and social policy are in tension.
We assume we learn about an unknown
Mahalanobis
similarity metric
from only weak feedback that identifies fairness violations, but does not quantify their extent. This is intended to represent the interventions of a regulator who ``knows unfairness when he sees it'' but nevertheless cannot enunciate a quantitative fairness metric over individuals.
Our main result is an algorithm
in the adversarial context setting that
has a number of
fairness violations that depends only logarithmically on $T$,
while obtaining an optimal $O(\sqrt{T})$ regret bound to the best fair policy.
\end{abstract}

\thispagestyle{empty} \setcounter{page}{0}
\clearpage

\section{Introduction}

The last several years have seen an explosion of work studying the problem of fairness in machine learning. Yet there
remains little agreement about what ``fairness'' should mean in different contexts.
In broad strokes, the literature can be divided into two families of fairness definitions: those aiming at \emph{group} fairness,
and those aiming at \emph{individual} fairness.

Group fairness definitions are aggegrate in nature:
they partition individuals into some collection of \emph{protected groups} (say by race or gender), specify some
statistic of interest (say, positive classification rate or false positive rate), and then require that a learning algorithm equalize this quantity across the protected groups.
On the other hand, individual fairness definitions ask for
some constraint that binds on the individual level, rather than only over averages of people.
Often, these constraints have the semantics that ``similar people should be treated similarly'' \cite{DHPRZ12}.

Individual fairness definitions have substantially stronger semantics and demands
than group definitions of fairness. For example, \cite{DHPRZ12} lay out a compendium of ways in which group fairness definitions are unsatisfying. 
Yet despite these weaknesses, group fairness definitions are by far the most prevalent in the
literature (see e.g. \cite{kamiran2012data,hajian2013methodology,KMR16,HPS,FSV16,zafar2017fairness,Chou17} and \cite{berksurvey} for a survey).
This is in large part because notions of individual fairness require making stronger assumptions
on the setting under consideration. In particular, the definition from \cite{DHPRZ12}
requires that the algorithm designer know a ``task-specific fairness metric.''

Learning problems over individuals are also often implicitly accompanied
by some notion of {\em merit\/}, embedded in the objective function of
the learning problem. For example, in a lending setting we might posit that each loan applicant is either
``creditworthy'' and will repay a loan, or is not creditworthy and will default --- which is what we are trying to predict. \cite{JosephKMR16} take the approach that this measure of merit --- already present in the model, although initially unknown to the learner --- can be taken to be the similarity metric in the definition of \cite{DHPRZ12}, requiring informally that creditworthy individuals have at least the same probability of being accepted for loans as
defaulting individuals. (The implicit and coarse fairness metric here assigns distance zero between
pairs of creditworthy individuals and pairs of defaulting individuals, and
some non-zero distance between a creditworthy and a defaulting individual.) This resolves the problem of how one should discover the ``fairness metric'', but results in a notion of fairness that is necessarily aligned with
the notion of ``merit'' (creditworthiness) that we are trying to predict.

However, there are many settings in which the notion of merit we wish to predict may be different or
even at odds with the notion of fairness we would like to enforce. For example,
notions of fairness aimed at rectifying societal inequities that result from historical
discrimination can aim to favor the disadvantaged population (say, in college admissions),
even if the performance of the admitted members of that population can be expected to be lower than that of the advantaged population.
Similarly, we might desire a fairness metric incorporating only those attributes that
individuals can change in principle (and thus excluding ones like race, age and gender),
and that further expresses what are and are not meaningful differences between individuals,
outside the context of any particular prediction problem.
These kinds of fairness desiderata can still be expressed as an instantiation of
the definition from \cite{DHPRZ12}, but with a task-specific fairness metric separate from the
notion of merit we are trying to predict.


In this paper, we revisit the individual fairness definition from \cite{DHPRZ12}.
This definition requires that pairs of individuals who are close in the fairness metric must be
treated ``similarly'' (e.g. in an allocation problem such as lending, served with similar probability).
We investigate the extent to which it is possible to satisfy this
fairness constraint while simultaneously solving an online learning problem,
when the underlying fairness metric is Mahalanobis but \emph{not} known to the
learning algorithm, and may also be
in tension with the learning problem. One conceptual problem with metric-based definitions, that we seek to address, is that it may be difficult for anyone to actually precisely express a quantitative metric over individuals --- but they nevertheless might ``know unfairness when they see it.''
We therefore assume that the algorithm has access to an oracle
that knows intuitively what it means to be fair, but cannot explicitly enunciate
the fairness metric.
Instead, given observed actions,
the oracle can specify whether they were fair or not, 
and the goal is to obtain
low regret in the online learning problem --- measured with respect to the best \emph{fair} policy ---
while also limiting violations of individual fairness during the learning process.

\subsection{Our Results and Techniques}

We study the standard linear contextual bandit setting. In rounds $t = 1, \ldots, T$, a learner observes
arbitrary and possibly adversarially selected $d$-dimensional contexts, each corresponding to one of $k$ actions. The reward for each action is (in expectation) an unknown linear function of the contexts. The learner seeks to minimize its regret.

The learner also wishes to satisfy \emph{fairness constraints}, defined with respect to an unknown distance function defined over contexts. The constraint requires that the difference between the probabilities that any two actions are taken is bounded by the distance between their contexts. The learner has no initial knowledge of the distance function. Instead, after the learner makes its decisions according to some probability distribution $\pi^t$ at round $t$,
it receives feedback specifying for which pairs of contexts the fairness constraint was violated. Our goal in designing a learner is to simultaneously guarantee near-optimal regret in the contextual bandit problem (with respect to the best \emph{fair} policy), while violating the fairness constraints as infrequently as possible. Our main result is a computationally efficient algorithm that guarantees this for a large class of distance functions known as \emph{Mahalanobis distances} (these can be expressed as $d(x_1,x_2) = ||Ax_1 - Ax_2||_2$ for some matrix $A$).

\noindent
\textbf{Theorem} (Informal): There is a computationally efficient learning algorithm $\algo$ in our setting that guarantees that for any Mahalanobis distance, any time horizon $T$, and any error tolerance $\epsilon$:
\begin{enumerate}
\item (Learning) With high probability, $\algo$ obtains regret  $\tilde O\left(k^2d^2 \log\left(T \right) + d\sqrt{T}\right)$ to the best fair policy (See Theorem \ref{thm:lastregret} for a precise statement.)
\item (Fairness) With probability $1$, $\algo$ violates the unknown fairness constraints
	by more than $\epsilon$ on at most $O \left(k^2 d^2 \log(d/\epsilon)\right)$ many rounds. (Theorem \ref{thm:fairfull}.)
\end{enumerate}

We note that the quoted regret bound requires setting $\epsilon = O(1/T)$, and so this implies a number of fairness violations of magnitude more than $1/T$ that is bounded by a function growing logarithmically in $T$. Other tradeoffs between regret and fairness violations are possible.

These two goals: of obtaining low regret, and violating the unknown constraint a small number of times --- are seemingly in tension. A standard technique for obtaining a mistake bound with respect to fairness violations would be to play a ``halving algorithm'', which would always act as if the unknown metric is at the center of the current version space (the set of metrics consistent with the feedback observed thus far) --- so that mistakes necessarily remove a non-trivial fraction of the version space, making progress. On the other hand, a standard technique for obtaining a diminishing regret bound is to play ``optimistically'' -- i.e. to act as if the unknown metric is the point in the version space that would allow for the largest possible reward. But ``optimistic'' points are necessarily at the boundary of the version space, and when they are falsified, the corresponding mistakes do not necessarily reduce the version space by a constant fraction.

We prove our theorem in two steps. First, in Section \ref{sec:knownobj}, we consider the simpler problem in which the linear objective of the contextual bandit problem is known, and the distance function is all that is unknown. In this simpler case, we show how to obtain a bound on the number of fairness violations using a linear-programming based reduction to a recent algorithm which has a mistake bound for learning a linear function with a particularly weak form of feedback \cite{LobelLV17}. A complication is that our algorithm does not receive all of the feedback that the algorithm of \cite{LobelLV17} expects. We need to use the structure of our linear program to argue that this is ok. Then, in Section \ref{sec:full}, we give our algorithm for the complete problem, using large portions of the machinery we develop in Section \ref{sec:knownobj}.

We note that in a non-adversarial setting, in which contexts are drawn from a distribution,
the algorithm of \cite{LobelLV17} could be more simply applied along with standard
techniques for contextual bandit learning to give an explore-then-exploit style algorithm.
This algorithm
would obtain bounded (but suboptimal) regret, and a number of fairness violations that grows as a root of $T$.
The principal advantages of our approach are that we are able to give a number
of fairness violations that has only \emph{logarithmic} dependence on $T$,
while tolerating contexts that are chosen adversarially, all while obtaining an optimal $O(\sqrt{T})$ regret bound to the best fair policy.

\subsection{Additional Related Work}

There are two papers, written concurrently to ours, that tackle orthogonal issues in metric-fair learning. \cite{RY18} consider the problem of \emph{generalization} when performing learning subject to a known metric constraint. They show that it is possible to prove relaxed PAC-style generalization bounds without any assumptions on the metric, and that for worst-case metrics, learning subject to a metric constraint can be computationally hard, even when the unconstrained learning problem is easy. In contrast, our work focuses on online learning with an \emph{unknown} metric constraint. Our results imply similar generalization properties via standard online-to-offline reductions, but only for the class of metrics we study. \cite{KRR18} considers a group-fairness like relaxation of metric-fairness, asking that on average, individuals in pre-specified groups are classified with probabilities proportional to the average distance between individuals in those groups. They show how to learn such classifiers in the offline setting, given access to an oracle which can evaluate the distance between two individuals according to the metric (allowing for unbiased noise). The similarity to our work is that we also consider access to the fairness metric via an oracle, but our oracle is substantially weaker, and does not provide numeric valued output. 

There are also several papers in the algorithmic fairness literature that are thematically related to ours, in that they both aim to bridge the gap between group notions of fairness (which can be semantically unsatisfying) and individual notions of fairness (which require very strong assumptions). \cite{ZWSPD13} attempt
to automatically learn a representation for the data in a batch learning
problem (and hence, implicitly, a similarity metric) that causes a classifier to label an equal proportion of two protected groups as positive.
They provide a heuristic approach and an experimental evaluation.
Two recent papers (\cite{KNRW17} and \cite{HKRR17})
take the approach of asking for a group notion of fairness,
but over exponentially many implicitly defined protected groups, thus
mitigating what \cite{KNRW17} call the ``fairness gerrymandering'' problem,
which is one of the principal weaknesses of group
fairness definitions. Both papers give polynomial time reductions which yield efficient algorithms whenever a corresponding agnostic learning problem is solvable. In contrast, in this paper, we
take a different approach: we attempt to directly
satisfy the original definition of individual
fairness from \cite{DHPRZ12}, but with substantially less information about the underlying similarity metric.

Starting with \cite{JosephKMR16}, several papers have studied notions of fairness in classic and contextual bandit problems. \cite{JosephKMR16} study a notion of ``meritocratic'' fairness in the contextual bandit setting, and prove upper and lower bounds on the regret achievable by algorithms that must be ``fair'' at every round. This can be viewed as a variant of the \cite{DHPRZ12} notion of fairness, in which the expected reward of each action is used to define the ``fairness metric''. The algorithm does not originally know this metric, but must discover it through experimentation. \cite{infiniterawls} extend the work of \cite{JosephKMR16} to the setting in which the algorithm is faced with a continuum of options at each time step, and give improved bounds for the \emph{linear} contextual bandit case. \cite{JJKMR17} extend this line of work to the reinforcement learning setting in which the actions of the algorithm can impact its environment. Finally, \cite{LRDP17} consider a notion of fairness based on calibration in the simple stochastic bandit setting.

There is a large literature that focuses on
learning Mahalanobis distances --- see \cite{Kul13} for a survey.
In this literature, the closest paper to our work focuses on \emph{online}
learning of Mahalanobis distances (\cite{JKDG09}). However, this result is in a very different
setting from the one we consider here. In \cite{JKDG09}, the algorithm is repeatedly
given pairs of points, and needs to predict their distance.
It then learns their true distance, and aims to minimize its squared loss.
In contrast, in our paper, the main objective of the learning algorithm is orthogonal to
the metric learning problem --- i.e. to minimize regret in the linear contextual bandit problem, but while
simultaneously learning and obeying a fairness constraint, and only from weak feedback noting violations of fairness.

\section{Model and Preliminaries}
\label{sec:prelims}
\subsection{Linear Contextual Bandits}
We study algorithms that operate in the \emph{linear contextual bandits} setting. A linear contextual bandit problem is parameterized by an unknown vector of linear coefficients $\theta \in \mathbb{R}^d$, with $||\theta||_2 \leq 1$. Algorithms in this setting operate in \emph{rounds} $t = 1, \ldots, T$. In each round $t$, an algorithm $\algo$ observes $k$ \emph{contexts} $x^t_1,\ldots,x^t_k \in \mathbb{R}^d$, scaled such that $||x^t_i||_2 \leq 1$. We write $x^t = (x^t_1,\ldots,x^t_k)$ to denote the entire set of contexts observed at round $t$. After observing the contexts, the algorithm chooses an action $i^t$. \ar{changing notation for selected action to something more standard --- need to propagate.}After choosing an action, the algorithm obtains some stochastic \emph{reward} $r^t_{i^t}$ such that $r^t_{i^t}$ is subgaussian\footnote{A random variable $X$ with $\mu = \mathbb{E}[X]$ is sub-gaussian, if for all $t \in \mathbb{R}$, $\mathbb{E}[e^{t(X-\mu)}] \le e^{\frac{t^2}{2}}$.} and $\E[r^t_{i^t}] = \langle x^t_{i^t}, \theta \rangle$. \ar{We need to be consistent about how we denote inner products, which varies throughout the document. I recommend this notation. Check out the tex --- it uses commands langle, rangle, not just $\leq$ and $\geq$ signs.} The algorithm does not observe the reward for the actions not chosen. When the action $i^t$ is clear from context, and write $r^t$ instead of $r^t_{i^t}$.

\begin{remark}
For simplicity, we consider algorithms that select only a \emph{single} action at every round. However, this assumption is not necessary. In the appendix, we show how our results extend to the case in which the algorithm can choose any number of actions at each round. This assumption is sometimes more natural: for example, in a lending scenario, a bank may wish to make loans to as many individuals as will be profitable, without a budget constraint.
\end{remark}

\mk{See later comments on making history dependence implicit to reduce clutter}
In this paper, we will be discussing algorithms $\algo$ that are necessarily randomized. To formalize this, we denote a history including everything observed by the algorithm up through but not including round $t$ as $h^t = ((x^1,i^1,r^1),\ldots,(x^{t-1},i^{t-1},r^{t-1}))$ The space of such histories is denoted by $\mathcal{H}^{t} = (\mathbb{R}^{d\times k} \times [k] \times \mathbb{R})^{t-1}$. An algorithm $\algo$ is defined by a sequence of functions $f^1,\ldots,f^T$ each mapping histories and observed contexts to probability distributions over actions:
$$f^t:\mathcal{H}^t \times \mathbb{R}^{d \times k} \rightarrow \Delta [k].$$ We write $\pi^t$ to denote the probability distribution over actions that $\algo$ plays at round $t$: $\pi^t = f^t(h^t, x^t)$. We view $\pi^t$ as a vector over $[0,1]^k$, and so $\pi^t_i$ denotes the probability that $\algo$ plays action $i$ at round $t$. We denote the expected reward of the algorithm at day $t$ as $\E[r^t] = \E_{i \sim \pi^t}[r_i^t]$. It will sometimes also be useful to refer to the vector of expected rewards across all actions on day $t$. We denote it as
$$\bar{r}^t = (\langle x_1^t, \theta \rangle, \ldots, \langle x_k^t, \theta \rangle).$$
Note that this vector is of course unknown to the algorithm.
\subsection{Fairness Constraints and Feedback}
We study algorithms that are constrained to behave \emph{fairly} in some manner. We adapt the definition of fairness from \cite{DHPRZ12} that asserts, informally, that ``similar individuals should be treated similarly''. We imagine that the decisions that our contextual bandit algorithm $\algo$ makes correspond to individuals, and that the contexts $x_i^t$ correspond to features pertaining to individuals. We adopt the following (specialization of) the fairness definition from Dwork et al, which is parameterized by a distance function $d:\mathbb{R}^d\times\mathbb{R}^d\rightarrow \mathbb{R}$.

\begin{definition}[\cite{DHPRZ12}]
Algorithm $\algo$ is Lipschitz-fair on round $t$ with respect to distance function $d$ if for all pairs of individuals $i, j$:
$$|\pi^t_i - \pi^t_j| \leq d(x_i^t,x_j^t).$$
For brevity, we will often just say that the algorithm is \emph{fair} at round $t$, with the understanding that we are always talking about this one particular kind of fairness.
\end{definition}

\begin{remark}
Note that this definition requires a fairness constraint that binds between individuals at a single round $t$, but not between rounds $t$. This is for several reasons. First, at a philosophical level, we want our algorithms to be able to improve with time, without being bound by choices they made long ago before they had any information about the fairness metric. At a (related) technical level, it is easy to construct lower bound instances that certify that it is impossible to simultaneously guarantee that an algorithm has diminishing regret to the best fair policy, while violating fairness constraints (now defined as binding across rounds) a sublinear number of times.
\end{remark}

One of the main difficulties in working with Lipschitz fairness (as discussed in \cite{DHPRZ12}) is that the distance function $d$ plays a central role, but it is not clear how it should be specified. In this paper, we concern ourselves with learning $d$ from feedback. In particular, algorithms $\algo$ will have access to a \emph{fairness oracle}.

 Informally, the fairness oracle will take as input: 1) the set of choices available to $\algo$ at each round $t$, and 2) the probability distribution $\pi^t$ that $\algo$ uses to make its choices at round $t$, and returns the set of all pairs of individuals for which $\algo$ violates the fairness constraint.

\begin{definition}[Fairness Oracle]
Given a distance function $d$, a fairness oracle $\cO_d$ \ar{Note changed notation so that oracle is subscripted with $d$} is a function $\cO_d:\mathbb{R}^{d\times k}\times \Delta [k] \rightarrow 2^{[k]\times [k]}$ defined such that:
$$\cO_d(x^t, \pi^t) = \{(i,j) : |\pi^t_i - \pi^t_j| > d(x_i^t,x_j^t)\}$$
\end{definition}

Formally, algorithms $\algo$ in our setting will operate in the following environment:
\begin{definition}
\begin{enumerate}
\item An adversary fixes a linear reward function $\theta \in \mathbb{R}^d$ with $||\theta|| \leq 1$ and a distance function $d$. $\algo$ is given access to the fairness oracle $\cO_d$.
\item In rounds $t = 1$ to $T$:
\begin{enumerate}
\item The adversary chooses contexts $x^t \in \mathbb{R}^{d \times k}$ with $||x^t_i|| \leq 1$ and gives them to $\algo$.
\item $\algo$ chooses a probability distribution $\pi^t$ over actions, and chooses action $i^t \sim \pi^t$.
\item $\algo$ receives reward $r^t_{i^t}$ and observes feedback $\cO_d(\pi^t)$ from the fairness oracle.
\end{enumerate}
\end{enumerate}
\end{definition}

Because of the power of the adversary in this setting, we cannot expect algorithms that can avoid arbitrarily small violations of the fairness constraint. Instead, we will aim to limit \emph{significant} violations.

\begin{definition}
Algorithm $\algo$ is $\epsilon$-unfair on pair $(i,j)$ at round $t$ with respect to distance function $d$ if
$$|\pi^t_i-\pi^t_j| > d(x_i^t,x_j^t) + \epsilon.$$
Given a sequence of contexts and a history $h^t$ (which fixes the distribution on actions at day $t$) We write $$\mathbf{Unfair}(\algo, \epsilon, h^t) = \sum_{i=1}^{k-1} \sum_{j=i+1}^k \mathbbm{1}( |\pi^t_i-\pi^t_j| > d(x_i^t,x_j^t) + \epsilon)$$ to denote the number of pairs on which $\algo$ is $\epsilon$-unfair at round $t$.
\end{definition}

Given a distance function $d$ and a history $h^{T+1}$, the $\epsilon$-\emph{fairness loss} of an algorithm $\algo$ is the total number of pairs on which it is $\epsilon$-unfair:
$$\mathbf{FairnessLoss}(\algo,h^{T+1}, \epsilon) = \sum_{t=1}^T \mathbf{Unfair}(\algo,\epsilon, h^t)$$
For a shorthand, we'll write $\mathbf{FairnessLoss}(\algo, T, \epsilon)$.

We will aim to design algorithms $\algo$ that guarantee that their fairness loss is bounded with probability $1$ \ar{Check -- is this right?} \cj{Yes, in terms of probability. But in each round, we can do better than just limiting the number of rounds where we are off. We can bound the total number of fairness violations. I'll update this} in the worst case over the instance: i.e. in the worst case over both $\theta$ and $x^1,\ldots,x^T$, and in the worst case over the distance function $d$ (within some allowable class of distance functions -- see Section \ref{sec:mah}). \ar{Note -- new notation for fairness loss. Propagate.}

\subsection{Regret to the Best Fair Policy}
In addition to minimizing fairness loss, we wish to design algorithms that exhibit diminishing \emph{regret} to the best \emph{fair} policy. We first define a linear program that we will make use of throughout the paper. Given a vector $a \in \mathbb{R}^d$ and a vector $c \in \mathbb{R}^{k^2}$, we denote by $LP(a, c)$ the following linear program:

\begin{equation*}
\begin{aligned}
& \underset{\pi=\{p_1,\ldots,p_k\} }{\text{maximize}}
& & \sum_{i=1}^k p_i a_i \\
& \text{subject to}
& & \vert p_i - p_j \vert \le c_{i,j}, \forall (i,j)\\
&  &&\sum_{i=1}^k p_i \le 1
\end{aligned}
\end{equation*}

We write $\pi(a,c) \in \Delta [k]$ to denote an optimal solution to $LP(a,c)$. Given a set of contexts $x^t$, recall that $\bar{r}^t$ is the vector representing the expected reward corresponding to each context (according to the true, unknown linear reward function $\theta$). Similarly, we write $\bar{d}^t$ to denote the vector representing the set of distances between each pair of contexts $i,j$ (according to the true, unknown distance function $d$): $\bar{d}_{i,j}^t = d(x_i^t,x_j^t)$.

Observe that $\pi(\bar{r}^t,\bar{d}^t)$ corresponds to the distribution over actions that maximizes expected reward at round $t$, subject to satisfying the fairness constraints --- i.e. the distribution that an optimal player, with advance knowledge of $\theta$ would play, if he were not allowed to violate the fairness constraints at all. This is the benchmark with respect to which we define regret:

\begin{definition}
Given an algorithm $\algo$ ($f_1, \ldots, f_T$), a distance function $d$, a linear parameter vector $\theta$, and a history $h^{T+1}$ (which includes a set of contexts $x^1,\ldots,x^T$), its regret is defined to be:
$$\mathbf{Regret}(\algo,\theta,d,h^{T+1}) = \sum_{t=1}^T \E_{i \sim \pi(\bar{r}^t,\bar{d}^t)} [\bar{r}^t_i] - \sum_{t=1}^T \E_{i \sim f^t(h^t, x^t)} [\bar{r}^t_i]$$
\end{definition}
For shorthand, we'll write $\mathbf{Regret}(\algo, T)$. \cj{Instead of changing the notation below, introduced a short-hand notation here}

Our goal will be to design algorithms for which we can bound regret with high probability over the randomness of $h^{T+1}$ \footnote{We assume that $h^{T+1}$ is generated by algorithm $A$, meaning randomness only comes from the stochastic reward and the way in which each arm is selected according to the probability distribution calculated by the algorithm. We don't assume any distributional assumption over $x^1, \hdots, x^T$} in the worst case over $\theta$, $d$, and ($x^1, \hdots, x^T$).
\subsection{Mahalanobis Distance}
\label{sec:mah}
In this paper, we will restrict our attention to a special family of distance functions which are parameterized by a matrix $A$:
\begin{definition}[Mahalanobis distances]
A function $d:\mathbb{R}^d\times \mathbb{R}^d \rightarrow \mathbb{R}$ is a Mahalanobis distance function if there exists a matrix $A$ such that for all $x_1,x_2 \in \mathbb{R}^d$:
$$d(x_1,x_2) = ||Ax_1 - Ax_2||_2$$
where $||\cdot ||_2$ denotes Euclidean distance. Note that if $A$ is not full rank, then this does not define a metric --- but we will allow this case (and be able to handle it in our algorithmic results).
\end{definition}

Mahalanobis distances will be convenient for us to work with, because \emph{squared} Mahalanobis distances can be expressed as follows:
\begin{eqnarray*}
d(x_1,x_2)^2 &=& ||Ax_1 - Ax_2||_2^2 \\
&=& \langle A(x_1 - x_2), A(x_1 - x_2) \rangle \\
&=& (x_1-x_2)^{\top} A^{\top} A (x_1-x_2) \\
&=& \sum_{i,j=1}^d G_{i,j}(x_1-x_2)_i(x_1-x_2)_j
\end{eqnarray*}
where $G = A^{\top}A$. Observe that when $x_1$ and $x_2$ are fixed, this is a linear function in the entries of the matrix $G$. We will use this property to reason about \emph{learning} $G$, and thereby learning $d$.

\section{Warmup: The Known Objective Case}
\label{sec:knownobj}
In this section, we consider an easier case of the problem in which the linear objective function $\theta$ is known to the algorithm, and the distance function $d$ is all that is unknown. In this case, we show via a reduction to an online learning algorithm of \cite{LobelLV17}, how to simultaneously obtain a logarithmic regret bound and a logarithmic (in $T$) number of fairness violations. The analysis we do here will be useful when we solve the full version of our problem (in which $\theta$ is unknown) in Section \ref{sec:full}.

\subsection{Outline of the Solution}
\label{sec:known-overview}
Recall that since we know $\theta$, at every round $t$ after seeing the contexts, we know the vector of expected rewards $\bar{r}^t$ that we would obtain for selecting each action. Our algorithm will play at each round $t$ the distribution $\pi(\bar{r}^t, \hat{d}^t)$ that results from solving the linear program $LP(\bar{r}^t,\hat{d}^t)$, where $\hat{d}^t$ is a ``guess'' for the pairwise distances between each context $\bar{d}^t$. (Recall  that the optimal distribution to play at each round is $\pi(\bar{r}^t,\bar{d}^t)$.)

The main engine of our reduction is an efficient online learning algorithm for linear functions recently given by \cite{LobelLV17} which is further described in Section \ref{sec:de}. Their algorithm, which we refer to as $\DE$,\ar{Note I introduced a macro for the algorithm name} works in the following setting. There is an unknown vector of linear parameters $\alpha \in \mathbb{R}^m$. \ar{Using $m$ for the dimension of the learning problem in our subroutine, rather than $d$, since $m \neq d$ for us, and right now we have a notation collision. Propagate.}\cj{using $\alpha$ since $\phi$ is already use as a linear parameter for the bandit setting. Propagate. }In rounds $t$, the algorithm observes a vector of features $u^t \in \mathbb{R}^m$, and produces a prediction $g^t \in \mathbb{R}$ for the value $\langle \alpha, u^t \rangle$. After it makes its prediction, the algorithm learns whether its guess was \emph{too large} or not, but does not learn anything else about the value of $\langle \alpha, u^t \rangle$. The guarantee of the algorithm is that the number of rounds in which its prediction is off by more than $\epsilon$ is bounded by $O(m \log(m/\epsilon))$\footnote{If the algorithm also learned whether or not its guess was in error by more than $\epsilon$ at each round, variants of the classical halving algorithm could obtain this guarantee. But the algorithm does not receive this feedback, which is why the more sophisticated algorithm of \cite{LobelLV17} is needed.}\ar{Surely in their paper the bound must depend on $||\phi||$. How is that reflected in our bound?}\cj{the paper assumes $||\phi|| \le 1$. I'll add this info up in the paragraph. Also, I think this means we need an assumption that $||flatten(G)|| \le 1$ which we should include in section 2}.

Our strategy will be to instantiate $k \choose 2$ copies of this distance estimator --- one for each pair of actions --- to produce guesses  $(\hat{d}^t_{i,j})^2$ intended to approximate the \emph{squared} pairwise distances $d(x_i^t, x_j^t)^2$. \cj{Should we be more consistent in terms of whether we use $\bar{d}_{i,j}^t$ or $d(x_i^t, x_j^t)$ } \cj{TODO: propagate the comma , for all the $d_{ij}^t$}From this we derive estimates $\hat{d}^t_{i,j}$ of the pairwise distances $d(x_i^t,x_j^t)$. Note that this is a linear estimation problem for any Mahalanobis distance, because by our observation in Section \ref{sec:mah}, a squared Mahalanobis distance can be written as a linear function of the $m = d^2$ unknown entries of the matrix $G = A^{\top} A$ which defines the Mahalanobis distance.

The complication is that the $\DE$ algorithms expect feedback at every round, which we cannot always provide. This is because the fairness oracle $\cO_d$ provides feedback about the distribution $\pi(\bar{r}^t, \hat{d}^t)$ used by the algorithm, \emph{not} directly about the guesses $\hat{d}^t$. These are not the same, because not all of the constraints in the linear program $LP(\bar{r}^t, \hat{d}^t)$ are necessarily tight --- it may be that $|\pi(\bar{r}^t, \hat{d}^t)_i - \pi(\bar{r}^t, \hat{d}^t)_j| < \hat{d}^t_{i,j}$. For any copy of $\DE$ that does not receive feedback, we can simply ``roll back'' its state and continue to the next round. But we need to argue that we make progress --- that whenever we are $\epsilon$-unfair, or whenever we experience large per-round regret, then there is at least one copy of $\DE$ that we can give feedback to such that the corresponding copy of $\DE$ has made a large prediction error, and we can thus charge either our fairness loss or our regret to the mistake bound of that copy of $\DE$.

As we show, there are three relevant cases.
\begin{enumerate}
\item In any round in which we are $\epsilon$-unfair for some pair of contexts $x^t_i$ and $x^t_j$, then it must be that $\hat{d}^t_{i,j} \geq d(x_i^t,x_j^t) + \epsilon$, and so we can always update the $(i,j)$th copy of $\DE$ and charge our fairness loss to its mistake bound. We formalize this in Lemma \ref{fairloss}.
\item For any pair of arms $(i,j)$ such that we have not violated the fairness constraint, \emph{and} the $(i,j)$th constraint in the linear program is tight, we can provide feedback to the $(i,j)$th copy of $\DE$ (its guess was not too large). There are two cases. Although the algorithm never knows which case it is in, we handle each case separately in the analysis.
\begin{enumerate}
\item For every constraint $(i,j)$ in $LP(\bar{r}^t, \hat{d}^t)$ that is \emph{tight} in the optimal solution, $|\hat{d}^t_{i,j}-d(x_i^t,x_j^t)| \leq \epsilon$. In this case, we show that our algorithm does not incur very much per round regret. We formalize this in Lemma \ref{close_obj_corollary}.
\item Otherwise, there is a tight constraint $(i,j)$ such that  $|\hat{d}^t_{i,j}-d(x_i^t,x_j^t)| > \epsilon$. In this case, we may incur high per-round regret --- but we can charge such rounds to the mistake bound of the $(i,j)$th copy of $\DE$ using Lemma \ref{fairloss}.
\end{enumerate}
\end{enumerate}

\subsection{The Distance Estimator}
\label{sec:de}
First, we fix some notation for the $\DE$ algorithm.
We write $\DE(\epsilon)$ to instantiate a copy of $\DE$ with a mistake bound for $\epsilon$-misestimations. The mistake bound we state for $\DE$ is predicated on the assumption that the norm of the unknown linear parameter vector $\alpha \in \mathbb{R}^m$ is bounded by $|| \alpha || \le B_1$, and the norms of the arriving vectors $u^t \in \mathbb{R}^m$ are bounded by $||u^t|| \le B_2$. Given an instantiation of $\DE$ and a new vector $u^t$ for which we would like a prediction, we write: $g^t = \DE.guess(u^t)$ for its guess of the value of  $\langle \alpha, u^t \rangle$. We use the following notation to refer to the feedback we provide to $\DE$: If $g^t > \langle \alpha, u^t \rangle$ and we provide feedback, we write $\DE.feedback(\top)$. Otherwise, if $g^t \leq  \langle \alpha, u^t \rangle$ and we give feedback, we write $\DE.feedback(\bot)$.  In some rounds, we may be unable to provide the feedback that $\DE$ is expecting: in these rounds, we simply ``roll-back'' its internal state. We can do this because the mistake bound for $\DE$ holds for \emph{every} sequence of arriving vectors $u^t$. If we give feedback to $\DE$ in a given round $t$, we write $v^t=1$ write $v^t=0$ otherwise.

\begin{definition}
Given an accuracy parameter $\epsilon$, a linear parameter vector $\alpha$, a sequence of vectors $u^1, \ldots, u^T$, a sequence of guesses $g^1, \ldots, g^T$ and a sequence of feedback indicators, $v^1, \ldots, v^T$, the number of valid $\epsilon$-mistakes made by $\DE$ is:
$$ \mathbf{Mistakes}(\epsilon)  = \sum_{t=1}^T \mathbbm{1}(v^t = 1 \land \vert g^t - \langle u^t, \alpha \rangle \vert > \epsilon)$$
In other words, it is the number of $\epsilon$-mistakes made by $\DE$ in rounds for which we provided the algorithm feedback.
\end{definition}

We now state a version of the main theorem from \cite{LobelLV17}, adapted to our setting\footnote{In \cite{LobelLV17}, the algorithm receives feedback in every round, and the scale parameters $B_1$ and $B_2$ are normalized to be $1$. But the version we state is an immediate consequence.}:

\begin{lemma}[\cite{LobelLV17}]
\label{distanceEstimatorRegret}
For any $\epsilon > 0$ and any sequence of vectors $u^1,\ldots,u^T$, $\DE(\epsilon)$ makes a bounded number of valid $\epsilon$-mistakes.
$$\mathbf{Mistakes}(\epsilon)  = O\left(m \log\left(\frac{m \cdot B_1 \cdot B_2}{\epsilon}\right)\right)$$
\end{lemma}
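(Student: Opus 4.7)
The plan is to obtain this statement as a routine reduction to the original result of \cite{LobelLV17}, which is stated in the normalized setting ($B_1 = B_2 = 1$ and feedback in every round). The two adaptations I need to justify are (i) arbitrary norm bounds $B_1, B_2$, and (ii) counting mistakes only in rounds where we actually supply feedback to $\DE$. Neither is really substantive; the real work was done by \cite{LobelLV17}, so the proof should be short.

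First I would handle the rescaling. Define $\tilde{\alpha} = \alpha / B_1$ and $\tilde{u}^t = u^t / B_2$, so that $\|\tilde{\alpha}\| \leq 1$ and $\|\tilde{u}^t\| \leq 1$, and note that $\langle \tilde{\alpha}, \tilde{u}^t \rangle = \langle \alpha, u^t \rangle / (B_1 B_2)$. Internally, run the $\DE$ algorithm of \cite{LobelLV17} as a black box on the rescaled inputs $\tilde{u}^t$ with a target accuracy $\tilde{\epsilon} = \epsilon / (B_1 B_2)$; whenever it returns a guess $\tilde{g}^t$ for $\langle \tilde{\alpha}, \tilde{u}^t \rangle$, output $g^t = B_1 B_2 \cdot \tilde{g}^t$. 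The key observations are that (a) the sign of $g^t - \langle \alpha, u^t \rangle$ equals the sign of $\tilde{g}^t - \langle \tilde{\alpha}, \tilde{u}^t \rangle$, so the $\top/\bot$ feedback we provide on the original scale is exactly the feedback the internal $\DE$ expects on the rescaled scale, and (b) $|g^t - \langle \alpha, u^t \rangle| > \epsilon$ iff $|\tilde{g}^t - \langle \tilde{\alpha}, \tilde{u}^t \rangle| > \tilde{\epsilon}$, so the event ``valid $\epsilon$-mistake in the original problem'' coincides exactly with ``valid $\tilde{\epsilon}$-mistake in the rescaled problem.'' Applying the normalized bound of \cite{LobelLV17} to the rescaled instance yields at most $O(m \log(m/\tilde{\epsilon})) = O(m \log(m B_1 B_2 / \epsilon))$ such mistakes, which is the bound we want.

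Second I would justify the ``valid mistakes only'' accounting. The algorithm of \cite{LobelLV17} is state-based: after each round it updates its internal knowledge set (or equivalent data structure) using the $\top/\bot$ feedback. In a round where we decide not to supply feedback, ``rolling back'' means we do not perform this update, leaving the internal state exactly as it was at the start of the round. Equivalently, we can view our execution as running the original algorithm only on the subsequence of rounds in which feedback is supplied, feeding it only the vectors $u^t$ for which $v^t = 1$. Since the original mistake bound of \cite{LobelLV17} holds for an arbitrary adversarial sequence of feature vectors, it applies to this subsequence, and on this subsequence the algorithm receives feedback in every round. The number of $\epsilon$-mistakes it makes on that subsequence is exactly $\mathbf{Mistakes}(\epsilon)$ as defined above, so the bound transfers directly.

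The only delicate point, which I would verify explicitly but do not expect to cause trouble, is that the guesses we produce on rounds with $v^t = 0$ do not affect the internal state: they are determined by the current state, but since we do not update the state based on them, the sequence of states visited is identical to the sequence in the ``subsequence-only'' view. This confirms that the reduction preserves the mistake bound, and combining the two steps gives $\mathbf{Mistakes}(\epsilon) = O\!\left(m \log\!\left(\tfrac{m B_1 B_2}{\epsilon}\right)\right)$ as claimed.
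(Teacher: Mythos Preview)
Your proposal is correct and matches the paper's treatment. The paper does not give a proof of this lemma at all: it is stated as a citation to \cite{LobelLV17}, with a footnote remarking that the original result assumes feedback in every round and $B_1 = B_2 = 1$, and that the stated version ``is an immediate consequence.'' Your write-up is precisely an unpacking of that footnote---the rescaling to handle general $B_1,B_2$ and the subsequence/roll-back argument to handle rounds without feedback---so you have supplied the details the paper left implicit.
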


\subsection{The Algorithm}

\begin{algorithm2e}
\label{alg:known}
  \SetAlgoLined
  \caption{ $\algo_{\textrm{known}-\theta}$}
  \For{$i,j = 1, \hdots, k$}{
   	$\DE_{i,j} = \DE(\epsilon^2)$
   }
   \For{$t = 1, \hdots, T$}{
   receive the contexts $x^t = (x_1^t, \hdots, x_k^t)$\\
   \For{$i,j = 1, \hdots, k$}{
   	$u^t_{i,j} = flatten((x_i^t - x_j^t)(x_i^t-x_j^t)^\top)$\\
   	$g^t_{i,j} = \DE_{ij}.guess(u^t_{i,j})$\\
	$\hat{d}^t_{i,j} = \sqrt{g^t_{i,j}}$\\
   }
   $\pi^t = \pi({\bar{r}}^t, \hat{d}^t)$ \\
   Pull an arm $i^t$ according to $\pi^t$ and receive a reward $r^t_{i^t}$\\

   $S = \bm{O}_d(x^t, \pi^t)$\\
   $R = \{(i,j) | (i,j) \notin S \land |p^t_i - p^t_j| = \hat{d}_{ij}^t\}$\\
   \For{$(i,j) \in S$}{
   	$\DE_{ij}.feedback(\bot)$\\
	$v_{ij}^t = 1$
   }
   \For{$(i,j) \in R$}{
   	$\DE_{ij}.feedback(\top)$\\
	$v_{ij}^t = 1$
   }	
  }
  \label{alg:knownobj}
\end{algorithm2e}

For each pair of arms $i,j \in [k]$, our algorithm instantiates a copy of $\DE(\epsilon^2)$, which we denote by $\DE_{i,j}$: we also subscript all variables relevant to $\DE_{i,j}$ with $i,j$ (e.g. $u_{i,j}^t$). The underlying linear parameter vector we want to learn $\alpha = flatten(G) \in \mathbb{R}^{d^2}$, where $flatten: \mathbb{R}^{m \times n} \rightarrow \mathbb{R}^{m \cdot n}$ maps a $m \times n$ matrix to a vector of size $mn$ by concatenating its rows into a vector. Similarly, given a pair of contexts $x_i^t,x_j^t$, we will define $u^t_{i,j} = flatten((x_i^t - x_j^t)(x_i^t - x_j^t)^\top)$. $\DE_{i,j}.guess(u^t_{i,j})$ will output guess $g^t_{i,j}$ for the value $\langle \alpha, u^t_{i,j} \rangle = (\bar{d}^t_{i,j})^2$, as $$\langle flatten(G), flatten((x_i^t - x_j^t)(x_i^t - x_j^t)^\top) \rangle =\sum_{a,b=1}^d G_{a,b}(x_i^t-x_j^t)_a(x_i^t-x_j^t)_b=(\bar{d}^t_{i,j})^2$$ We take $\hat{d}^t_{i,j} = \sqrt{g^t_{i,j}}$ as our estimate for the distance between $x_i^t$ and $x_j^t$.

The algorithm then chooses an arm to pull according to the distribution $\pi(\bar{r}^t, \hat{d}^t)$, where $\bar{r}_i^t = \langle \theta, x_i \rangle$. The fairness oracle $O_d$ returns all pairs of arms that violate the fairness constraints. For these pairs $(i,j)$ we provide feedback to $\DE_{i,j}$: the guess was too large. For the remaining pairs of arms $(i,j)$, there are two cases. If the $(i,j)$th constraint in $LP(\bar{r}^t, \hat{d}^t)$ was not tight, then we provide no feedback ($v^t_{i,j} = 0)$. Otherwise, we provide feedback: the guess was not too large. The pseudocode appears as Algorithm \ref{alg:knownobj}.

First we derive the valid mistake bound that the $\DE_{i,j}$ algorithms incur in our parameterization.

\begin{lemma}
\label{distanceEstimator}
For pair $(i,j)$, the total number of valid $\epsilon^2$ mistakes made by $\DE_{i,j}$ is bounded as:
$$\mathbf{Mistakes}(\epsilon^2)  = O\left(d^2 \log\left(\frac{d \cdot ||A^\top A||_F }{\epsilon}\right)\right)$$
where the distance function is defined as $d(x_i,x_j) = ||Ax_i - Ax_j||_2$ and $||\cdot||_F$ denotes the Frobenius norm.
\end{lemma}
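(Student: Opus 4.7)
The plan is to apply the mistake bound from Lemma \ref{distanceEstimatorRegret} (which is the theorem of \cite{LobelLV17}) by identifying the right values of the three parameters $m$, $B_1$, $B_2$ for our particular instantiation of $\DE$, and then simplifying.

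First, I would note that the underlying linear estimation problem solved by $\DE_{i,j}$ has ambient dimension $m = d^2$, since the unknown parameter vector is $\alpha = \mathrm{flatten}(G) \in \mathbb{R}^{d^2}$ with $G = A^\top A$. Next, I would bound the norm of $\alpha$: by definition of the flattening operation, $\|\alpha\|_2 = \|\mathrm{flatten}(G)\|_2 = \|G\|_F = \|A^\top A\|_F$, so I can take $B_1 = \|A^\top A\|_F$. Then I would bound the norm of the features $u^t_{i,j}$. Using the identity $\|vv^\top\|_F = \|v\|_2^2$ (which follows from $\|vv^\top\|_F^2 = \mathrm{tr}(vv^\top vv^\top) = \|v\|_2^4$), and applying the triangle inequality together with the hypothesis $\|x^t_i\|_2, \|x^t_j\|_2 \le 1$, I get
\[
\|u^t_{i,j}\|_2 = \|(x_i^t - x_j^t)(x_i^t - x_j^t)^\top\|_F = \|x_i^t - x_j^t\|_2^2 \le (\|x_i^t\|_2 + \|x_j^t\|_2)^2 \le 4,
\]
so I can take $B_2 = 4$, a constant.

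Finally, I would substitute into the bound from Lemma \ref{distanceEstimatorRegret} with accuracy parameter $\epsilon^2$ (since $\DE_{i,j}$ is instantiated as $\DE(\epsilon^2)$ and predicts squared distances), obtaining
\[
\mathbf{Mistakes}(\epsilon^2) = O\!\left(d^2 \log\!\left(\frac{d^2 \cdot \|A^\top A\|_F \cdot 4}{\epsilon^2}\right)\right) = O\!\left(d^2 \log\!\left(\frac{d \cdot \|A^\top A\|_F}{\epsilon}\right)\right),
\]
where the last equality absorbs the constant $4$ into the $O(\cdot)$ and uses $\log(x^2) = 2\log(x)$ to pull the squares out.

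There is no real obstacle: the proof is essentially a bookkeeping exercise once the parameters of \cite{LobelLV17} are matched to our setup. The only step that requires any care is the Frobenius-norm calculation $\|vv^\top\|_F = \|v\|_2^2$, which ensures that $B_2$ is a dimension-free constant rather than scaling with $d$ --- this is what lets the final bound depend only on $\|A^\top A\|_F$ inside the logarithm.
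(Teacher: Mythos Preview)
Your proposal is correct and follows essentially the same approach as the paper: identify $m = d^2$, $B_1 = \|A^\top A\|_F$, and $B_2 \le 4$, then plug into Lemma~\ref{distanceEstimatorRegret} with accuracy $\epsilon^2$. If anything, your write-up is more explicit than the paper's, which simply states these three identifications without the intermediate Frobenius-norm justification or the final log simplification.
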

\begin{proof}
This follows directly from Lemma \ref{distanceEstimatorRegret}, and the observations that in our setting, $m = d^2$, $B_1 = ||\alpha|| = ||A^\top A||_F$, and
$$B_2 \leq \max_t ||u^t_{i,j}||_2 \leq \max_t ||(x_i^t - x_j^t)||^2 \leq 4.$$
\end{proof}

We next observe that since we only instantiate $k^2$ copies of $\DE$ in total, Lemma \ref{distanceEstimator} immediately implies the following bound on the total number of rounds in which \emph{any} distance estimator that receives feedback provides us with a distance estimate that differs by more than $\epsilon$ from the correct value:
\begin{corollary}
\label{de_corollary}
The number of rounds where there exists a pair $(i,j)$ such that feedback is provided ($v^t_{i,j}=1$) and its estimate is off by more than $\epsilon$ is bounded:
$$\left|\{t: \exists (i,j): v_{ij}^t=1 \wedge \vert \hat{d}^t_{i,j} - \bar{d}^t_{i,j} \vert > \epsilon\}\right| \le O\left(k^2d^2 \log\left(\frac{d \cdot ||A^\top A||_F }{\epsilon}\right)\right)$$
\end{corollary}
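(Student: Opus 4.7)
The plan is to reduce the corollary to Lemma~\ref{distanceEstimator} by (i) converting an $\epsilon$-error in the distance estimate $\hat{d}^t_{i,j}$ into an $\epsilon^2$-error in the underlying linear prediction $g^t_{i,j}$, and (ii) applying a union bound over the $k^2$ pairs of arms that each own their own instance of $\DE$.

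First I would set up the algebraic conversion. Recall $\hat{d}^t_{i,j} = \sqrt{g^t_{i,j}}$ and $(\bar{d}^t_{i,j})^2 = \langle \alpha, u^t_{i,j} \rangle$, where $\alpha = \mathrm{flatten}(A^\top A)$. Since both $\hat{d}^t_{i,j}$ and $\bar{d}^t_{i,j}$ are nonnegative, the factorization
\[
\bigl|g^t_{i,j} - \langle \alpha, u^t_{i,j}\rangle\bigr| \;=\; \bigl|(\hat{d}^t_{i,j})^2 - (\bar{d}^t_{i,j})^2\bigr| \;=\; \bigl|\hat{d}^t_{i,j} - \bar{d}^t_{i,j}\bigr|\cdot\bigl(\hat{d}^t_{i,j} + \bar{d}^t_{i,j}\bigr)
\]
shows that whenever $|\hat{d}^t_{i,j} - \bar{d}^t_{i,j}| > \epsilon$, the sum $\hat{d}^t_{i,j} + \bar{d}^t_{i,j} \geq |\hat{d}^t_{i,j} - \bar{d}^t_{i,j}| > \epsilon$ as well, hence $|g^t_{i,j} - \langle \alpha, u^t_{i,j}\rangle| > \epsilon^2$. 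So any round that witnesses an $\epsilon$-bad distance estimate for pair $(i,j)$ with $v^t_{i,j}=1$ is a valid $\epsilon^2$-mistake for $\DE_{i,j}$.

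Next I would apply the union bound. The event $\{t : \exists (i,j): v^t_{i,j}=1 \wedge |\hat{d}^t_{i,j}-\bar{d}^t_{i,j}|>\epsilon\}$ is contained in $\bigcup_{i,j} \{t : v^t_{i,j}=1 \wedge |\hat{d}^t_{i,j}-\bar{d}^t_{i,j}|>\epsilon\}$. By the previous paragraph, each such round is counted as a valid $\epsilon^2$-mistake for $\DE_{i,j}$, which by Lemma~\ref{distanceEstimator} occurs at most $O(d^2 \log(d\cdot \|A^\top A\|_F/\epsilon))$ times. Summing over the $k^2$ pairs yields the claimed bound $O(k^2 d^2 \log(d\cdot \|A^\top A\|_F/\epsilon))$.

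The only subtle point — and the one I would flag as the ``main obstacle,'' mild as it is — is justifying the $\epsilon \to \epsilon^2$ conversion above. This relies crucially on the nonnegativity of both the estimate and the true distance (so that $|a-b| \le a+b$); without that we could have small $|a^2-b^2|$ despite $|a-b|$ being large. Since $\hat{d}^t_{i,j}$ is defined as a square root and true Mahalanobis distances are nonnegative by definition, both are fine. Everything else is a direct invocation of Lemma~\ref{distanceEstimator} together with the fact that only $k^2$ copies of $\DE$ are instantiated.
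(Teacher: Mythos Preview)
Your proposal is correct and follows the same approach as the paper: the paper's proof likewise ``sums the $k^2$ valid $\epsilon^2$ mistake bounds for each copy of $\DE_{i,j}$'' after noting that an $\epsilon$-error in $\hat{d}$ forces an $\epsilon^2$-error in the squared-distance prediction. Your write-up is actually more careful than the paper's on the $\epsilon \to \epsilon^2$ conversion, spelling out via $|a^2-b^2|=|a-b|(a+b)$ and $a+b\ge|a-b|$ what the paper asserts without proof.
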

\begin{proof}
This follows from summing the $k^2$ valid $\epsilon^2$ mistake bounds for each copy of $\DE_{i,j}$, and noting that an $\epsilon$ mistake in predicting the value of $\bar{d}^t_{i,j}$ implies an $\epsilon^2$ mistake in predicting the value of $(\bar{d}^t_{i,j})^2$.
\end{proof}


We now have the pieces to bound the $\epsilon$-unfairness loss of our algorithm:

\begin{theorem}
For any sequence of contexts and any Mahalanobis distance $d(x_1,x_2) = ||Ax_1-Ax_2||_2$:
\label{fairloss}
$$\mathbf{FairnessLoss}(\algo_{\textrm{known}-\theta}, T, \epsilon) \leq   O\left(k^2d^2 \log\left(\frac{d \cdot ||A^TA||_F }{\epsilon}\right)\right)$$
\end{theorem}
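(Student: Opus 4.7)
The plan is to charge every pair-round $(t,i,j)$ that contributes to the $\epsilon$-fairness loss to a single valid $\epsilon^2$-mistake of the distance estimator $\DE_{i,j}$, and then sum the per-estimator mistake bound of Lemma \ref{distanceEstimator} across all $k^2$ pairs.

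First I would show that on every pair-round $(t,i,j)$ witnessing $\epsilon$-unfairness, the estimator $\DE_{i,j}$ overshoots the true squared distance by more than $\epsilon^2$. By assumption $|\pi^t_i - \pi^t_j| > \bar{d}^t_{i,j} + \epsilon$. Since $\pi^t = \pi(\bar{r}^t, \hat{d}^t)$ is by construction feasible for $LP(\bar{r}^t,\hat{d}^t)$, its fairness constraint gives $|\pi^t_i - \pi^t_j| \le \hat{d}^t_{i,j}$, so $\hat{d}^t_{i,j} > \bar{d}^t_{i,j} + \epsilon \ge \epsilon$. Squaring (using $\bar{d}^t_{i,j} \ge 0$) yields
\begin{equation*}
g^t_{i,j} = (\hat{d}^t_{i,j})^2 > (\bar{d}^t_{i,j} + \epsilon)^2 \ge (\bar{d}^t_{i,j})^2 + \epsilon^2 = \langle \alpha, u^t_{i,j}\rangle + \epsilon^2,
\end{equation*}
which exhibits an $\epsilon^2$-mistake of $\DE_{i,j}$ in the sense of the definition preceding Lemma \ref{distanceEstimatorRegret}.

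Second, I would verify that the algorithm actually registers feedback for $\DE_{i,j}$ on such a round, so that this overshoot counts as a \emph{valid} mistake. Since $|\pi^t_i - \pi^t_j| > \bar{d}^t_{i,j} + \epsilon > \bar{d}^t_{i,j}$, the pair $(i,j)$ is returned by $\cO_d(x^t,\pi^t)$ and therefore lies in the set $S$ in Algorithm \ref{alg:known}. Inspecting the pseudocode, every pair in $S$ triggers a call to $\DE_{i,j}.feedback(\cdot)$ and sets $v^t_{i,j} = 1$, so the mistake is valid in the sense of Lemma \ref{distanceEstimatorRegret}.

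Finally, I would combine these two points with the per-estimator bound. Since distinct pair-rounds give distinct valid mistakes for the corresponding estimator,
\begin{equation*}
\mathbf{FairnessLoss}(\algo_{\textrm{known}-\theta}, T, \epsilon) \;\le\; \sum_{i,j=1}^{k} \#\{\text{valid } \epsilon^2\text{-mistakes of } \DE_{i,j}\},
\end{equation*}
and Lemma \ref{distanceEstimator} bounds each summand by $O\!\left(d^2 \log(d \cdot \|A^\top A\|_F / \epsilon)\right)$, which gives the claimed $O\!\left(k^2 d^2 \log(d \cdot \|A^\top A\|_F / \epsilon)\right)$ bound. I do not expect any real obstacle; the only subtle point is the transfer between $\epsilon$-accuracy in $\hat{d}^t_{i,j}$ and $\epsilon^2$-accuracy in its square, which is precisely why each $\DE_{i,j}$ is instantiated with precision parameter $\epsilon^2$ in the pseudocode.
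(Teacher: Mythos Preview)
Your proposal is correct and follows essentially the same argument as the paper: both observe that an $\epsilon$-unfair pair $(i,j)$ at round $t$ forces $\hat d^t_{i,j} > \bar d^t_{i,j} + \epsilon$ (via LP feasibility) and that $(i,j)\in S$ so $v^t_{i,j}=1$, then charge each such pair-round to a valid mistake of $\DE_{i,j}$ and sum the per-pair bound. The only cosmetic difference is that the paper routes the final step through Corollary~\ref{de_corollary} (which already packages the $\epsilon\leftrightarrow\epsilon^2$ transfer and the sum over $k^2$ pairs), whereas you invoke Lemma~\ref{distanceEstimator} directly and spell out the squaring step yourself.
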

\begin{proof}
\begin{align*}
\mathbf{FairnessLoss}(\algo_{\textrm{known}-\theta}, T, \epsilon) &= \sum_{t=1}^T \mathbf{Unfair}(\algo_{\textrm{known}-\theta},\epsilon)\\
&\leq \sum_{t=1}^T \sum_{i,j} \mathbbm{1}( |\pi^t_i-\pi^t_j| > \bar{d}_{ij}^t + \epsilon)\\
&= \sum_{i,j}\sum_{t=1}^T \mathbbm{1}(\{v_{ij}^t = 1 \wedge \hat{d}_{ij}^t > d^t_{ij} + \epsilon\})\\
&\le \sum_{i,j}\sum_{t=1}^T \mathbbm{1}(\{v_{ij}^t = 1 \wedge \vert\hat{d}_{ij}^t - d^t_{ij}\vert > \epsilon\})\\
&=  O\left(k^2d^2 \log\left(\frac{d \cdot ||A^\top A||_F }{\epsilon}\right)\right) &\text{Corollary \ref{de_corollary}}\\
\end{align*}
\end{proof}

We now turn our attention to bounding the regret of the algorithm. Recall from the overview in Section \ref{sec:known-overview}, that our plan will be to divide rounds into two types. In rounds of the first type, our distance estimates corresponding to every \emph{tight constraint} in the linear program have only small error. We cannot bound the number of such rounds, but we can bound the regret incurred in any such rounds. In rounds of the second type, we have at least one significant error in the distance estimate corresponding to a tight constraint. We might incur significant regret in such rounds, but we can bound the number of such rounds.

The following lemma bounds the \emph{decrease} in expected per-round reward that results from under-estimating a \emph{single} distance constraint in our linear programming formulation.

\begin{lemma}
\label{close_obj}
Fix any vector of distance estimates $d$ and any vector of rewards $r$. Fix a constant $\epsilon$ and any pair of coordinates $(a,b) \in [k] \times [k]$. Let $d'$ be the vector such that $d'_{ab} = d_{ab} - \epsilon$ and $d'_{ij} = d_{ij}$ for $(i,j) \neq (a,b)$, then $\langle r, \pi(r, d) \rangle- \langle r, \pi(r, d') \rangle \le \epsilon \sum_{i=1}^k r_i $
\end{lemma}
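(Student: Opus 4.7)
The plan is to exhibit a specific feasible solution $\pi'$ to $LP(r, d')$ satisfying $\langle r, \pi \rangle - \langle r, \pi' \rangle \le \epsilon \sum_{i=1}^k r_i$, where $\pi := \pi(r, d)$; the claim then follows by optimality of $\pi(r, d')$ in $LP(r, d')$. Without loss of generality, $\pi_a \ge \pi_b$. In the easy sub-case $\pi_a - \pi_b \le d_{ab} - \epsilon$, $\pi$ itself is already feasible for $LP(r, d')$ (the only modified constraint is already satisfied), so the difference in objectives is non-positive and the lemma holds trivially. Otherwise, I set $\Delta := \pi_a - \pi_b - (d_{ab} - \epsilon) \in (0, \epsilon]$, the exact amount by which the $(a,b)$ constraint must tighten.

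I would then construct $\pi'$ by decreasing $\pi_i$ by $\Delta$ on a carefully chosen subset $V \subseteq [k]$ with $a \in V$ and $b \notin V$, while leaving the remaining coordinates unchanged. With $a \in V$ and $b \notin V$, this yields $\pi'_a - \pi'_b = (\pi_a - \Delta) - \pi_b = d_{ab} - \epsilon$, so the tightened constraint is met with equality. The set $V$ is built by cascading: initialize $V = \{a\}$ and repeatedly add any $j \notin V$ for which there exists $i \in V$ with $\pi_j - \pi_i + \Delta > d_{ij}$ (meaning that reducing $\pi_i$ alone by $\Delta$ would violate the $(i,j)$ constraint). Since $|V|$ strictly grows at each step and is bounded by $k$, this procedure terminates.

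Two claims must be verified. First, $b$ is never added to $V$: the condition $\pi_b - \pi_i + \Delta > d_{ib}$ combined with the original feasibility $\pi_i - \pi_b \le d_{ib}$ and $\Delta \le d_{ab}$ yields a contradiction under an inductive invariant tracked along the cascade---this is the most delicate step. Second, $\pi'$ must be feasible for $LP(r, d')$: pairs entirely inside or entirely outside $V$ have unchanged differences, so their constraints remain satisfied; for cross-pairs $(i, j)$ with $i \in V$ and $j \notin V$, the inequality $\pi'_i - \pi'_j = (\pi_i - \pi_j) - \Delta \le d_{ij} - \Delta \le d_{ij}$ holds in one direction and $\pi'_j - \pi'_i \le d_{ij}$ is precisely ensured by the termination condition of the cascade. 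The sum constraint is preserved since the total mass only decreases, and non-negativity either holds automatically or is patched at the boundary when $\pi_i < \Delta$ for some $i \in V$.

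The change in objective is then $\langle r, \pi \rangle - \langle r, \pi' \rangle = \Delta \sum_{i \in V} r_i \le \epsilon \sum_{i=1}^k r_i$, where the second inequality uses $\Delta \le \epsilon$ together with non-negativity of the reward contributions outside $V$ in the setting of interest. The hard part will be the inductive verification that the cascade never reaches $b$---this hinges on the delicate interplay between $\Delta$ (the tightening amount), the feasibility inequalities $|\pi_i - \pi_b| \le d_{ib}$, and the slack structure of constraints emanating from $a$. The remaining feasibility and bookkeeping steps are routine.
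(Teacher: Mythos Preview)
Your high-level plan---start from $\pi=\pi(r,d)$, surgically modify it into a feasible point for $LP(r,d')$, and bound the drop in objective by $\epsilon\sum_i r_i$---matches the paper's proof. The easy sub-case $\pi_a-\pi_b\le d_{ab}-\epsilon$ and the definition $\Delta=\pi_a-\pi_b-(d_{ab}-\epsilon)\in(0,\epsilon]$ are handled identically. The divergence is in \emph{how} the surgery is performed, and this is where your argument breaks.

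\textbf{The gap.} Your construction subtracts $\Delta$ uniformly on a set $V$ built by a cascade: add $j$ to $V$ whenever some $i\in V$ has $\pi_j-\pi_i+\Delta>d_{ij}$. You state that the crucial claim ``$b\notin V$'' follows from ``an inductive invariant tracked along the cascade,'' but you never name the invariant, and in fact the claim is false for feasible $\pi$. Take $k=4$, $(a,b)=(1,2)$, $\pi=(0.3,\,0,\,0.2,\,0.1)$, and
\[
d_{12}=0.3,\quad d_{13}=d_{24}=d_{34}=0.15,\quad d_{14}=d_{23}=1,\quad \epsilon=0.3.
\]
Then $\pi$ satisfies all constraints of $LP(r,d)$, and $\Delta=0.3$. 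Your cascade runs $1\to 3$ (since $0.2-0.3+0.3=0.2>0.15$), then $3\to 4$ ($0.1-0.2+0.3=0.2>0.15$), then $4\to 2$ ($0-0.1+0.3=0.2>0.15$), so $b=2$ lands in $V$. Once $b\in V$ the whole construction collapses: $\pi'_a-\pi'_b=\pi_a-\pi_b$ still violates the tightened $(a,b)$ constraint, and here $\pi'_2=-0.3<0$ as well. Your sketch appeals only to feasibility of $\pi$ (never to optimality), so this example shows the argument as written cannot be completed. The auxiliary remarks---``non-negativity is patched at the boundary'' and ``non-negativity of the reward contributions outside $V$''---are also left unspecified or unjustified.

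\textbf{What the paper does instead.} The paper avoids the cascade entirely by modifying $\pi$ according to its \emph{values}, not according to a graph closure. It sets
\[
p'_i=\begin{cases} p_i-\Delta & p_i\ge p_a,\\ p_a-\Delta & p_a-\Delta\le p_i<p_a,\\ p_i & p_i<p_a-\Delta,\end{cases}
\]
so coordinates in the ``middle band'' $[p_a-\Delta,\,p_a)$ are \emph{clamped} to $p_a-\Delta$ rather than uniformly shifted. This clamping is exactly what prevents over-shrinking and what guarantees $|p'_i-p'_j|\le|p_i-p_j|$ for every pair, making feasibility a six-case check with no delicate invariant. In the example above it yields $p'=(0,0,0,0)$, which is trivially feasible. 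The coordinatewise bound $p_i-p'_i\le\Delta\le\epsilon$ then gives the objective inequality directly. Replacing your cascade with this value-threshold construction closes the gap.
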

\begin{proof}
The plan of the proof is to start with $\pi(r,d)$ and perform surgery on it to arrive at a new probability distribution $p' \in \Delta k$ that satisfies the constraints of $LP(r, d')$, and obtains objective value at least $\langle r, p' \rangle \geq  \langle r, \pi(r, d) \rangle -  \epsilon \sum_{i=1}^k r_i$. Because $p'$ is feasible, it lower bounds the objective value of the optimal solution $\pi(r, d')$, which yields the theorem.

To reduce notational clutter, for the rest of the argument we write $p$ to denote $\pi(r, d)$. Without loss of generality, we assume that $p_a \ge p_b$. If $p_a - p_b \le d_{ab} - \epsilon$, then $p_i$ is still a feasible solution to $LP(r,d')$, and we are done. Thus, for the rest of the argument, we can assume that $p_a - p_b > d_{ab} - \epsilon$. We write  $\Delta = (p_a - p_b) - (d_{ab} - \epsilon) > 0$


We now define our modified distribution $p'$:
\[
p'_i =
     \begin{cases}
       p_i - \Delta & p_a \le p_i \\
       p_a - \Delta  & p_a - \Delta \le p_i < p_a  \\
       p_i & \text{otherwise }\\
       \end{cases}
\]

\begin{figure}
\label{fig:M1}
\centering
\begin{tikzpicture}

\draw[thick,->] (0,0) -- (4.5,0) node[anchor=north west] {sorted};
\draw[black]   plot[smooth,domain=0:4] (\x, {1/4*\x*\x}) node[below right] {$p$};
\draw[dotted, very thick]   plot[smooth,domain=2:4] (\x, {1/4*\x*\x-0.5}) node[below right] {$p'$};
\draw[dotted, very thick] (1.414,0.5) -- (2,0.5);
\draw[dotted, very thick]   plot[smooth,domain=0:1.414] (\x, {1/4*\x*\x});
\draw[thick,->] (0,0) -- (0,4) node[anchor=south east] {probability};
\draw (2,1) circle[radius=2pt] node[above] {$p_a$};
\draw (2,0.5) circle[radius=2pt] node[below] {$p_a - \Delta$};
\end{tikzpicture}
\caption{A visual interpretation of the surgery performed on $p$ in the proof of Lemma \ref{close_obj} to obtain $P'$. Note that the surgery manages to shrink the distance between $p_a$ and $p_b$ without increasing the distance between any other pair of points. \ar{Might be useful to have $p_b$ in this diagram}\cj{yeah, but then it's actually not clear where $p_b$ should be on the diagram, unless I assume $\Delta=\epsilon$. I think this should be good}}
\end{figure}

We'll partition the coordinates of $p_i$ into which of the three cases they fall into in our definition of $p'$ above.  $S_1 = \{i | p_a \le p_i\}$, $S_2 = \{i | p_a -  \epsilon \le p_i < p_a \}$, and $S_3 = \{i | i < p_b + (d_{ab} - \epsilon)\}$. It remains to verify that $p'$ is a feasible solution to $LP(r, d')$, and that it obtains the claimed objective value.

\paragraph{Feasibility:}
First, observe that $\sum_i p'_i \leq 1$. This follows because $p'$ is coordinate-wise smaller than $p$, and by assumption, $p$ was feasible. Thus, $\sum_i p'_i \leq \sum_i p_i \leq 1$.

Next, observe that by construction, $p'_i \geq 0$ for all $i$. To see this, first observe that $p_a - \Delta = p_b + (d_{ab} - \epsilon) \ge 0$ where the last inequality follows because $d_{ab} \ge \epsilon$. We then consider the three cases:
\begin{enumerate}
\item For $i \in S_1$, $p'_i = p_i -\Delta \ge p_a - \Delta \ge 0$ because $p_i \ge p_a$.
\item For $i \in S_2$, $p'_i =p_a - \Delta \ge 0$.
\item For $i \in S_3$, $p'_i = p_i \ge 0$.
\end{enumerate}

Finally, we verify that for all $(i,j)$, $\vert p'_i - p'_j \vert \le d'_{ij}$. First, observe that  $p'_a - p'_b = (p_b + (d_{ab} - \epsilon)) - p'_b = d_{ab} - \epsilon = d'_{ab}$, and so the inequality is satisfied for index pair $(a,b)$. For all the other pairs $(i,j) \neq (a,b)$, we have  $d'_{ij} = d_{ij}$, so it is enough to show that $\vert p'_i - p'_j \vert \le d_{ij}$. Note that for all $x,y \in \{1,2,3\}$ with $x < y$, if $i \in S_x$ and $j \in S_y$, we have that $x \leq y$. Therefore, it is sufficient to verify the following six cases:
\begin{enumerate}
  \item $i \in S_1, j \in S_1$:
  $\vert p'_i - p'_j \vert = (p_i - \Delta) - (p_j  - \Delta) = p_i - p_j \le d_{ij}$

  \item $i \in S_1, j \in S_2$:
  $\vert p'_i - p'_j \vert= (p_i - \Delta) - (p_a  - \Delta) = p_i - p_a < p_i - p_j \le d_{ij}$

  \item $i \in S_1, j \in S_3$:
  $\vert p'_i - p'_j \vert = (p_i - \Delta) - p_j = (p_i - p_j) -\Delta \le (p_i - p_j) \le d_{ij} $

  \item $i \in S_2, j \in S_2$:
  $\vert p'_i - p'_j \vert = (p_a - \Delta) - (p_a - \Delta) = 0 \le d_{ij}$

  \item $i \in S_2, j \in S_3$:
  $\vert p'_i - p'_j \vert = (p_a - \Delta) - p_j \le p_i - p_j \le d_{ij}$

  \item $i \in S_3, j \in S_3$:
  $\vert p'_i - p'_j \vert = p_i - p_j \le d_{ij}$
\end{enumerate}
Thus, we have shown that $p'$ is a feasible solution to $LP(r,d')$.

\paragraph{Objective Value:} Note that for each index $i$, $p_i - p'_i \le \Delta \le \epsilon$. Therefore we have:
\begin{align*}
\langle r, \pi(r, d) \rangle- \langle r, \pi(r, d') \rangle &\le \langle r, \pi(r, d) \rangle- \langle r, p' \rangle \\
&= \langle r, p - p'\rangle\\
&\le \epsilon \sum_{i=1}^k r_i
\end{align*}
which completes the proof.
\end{proof}

We now prove the main technical lemma of this section. It states that in any round in which the error of our distance estimates for \emph{tight constraints} is small (even if we have high error in the distance estimates for slack constraints), then we will have low per-round regret.

\begin{lemma}
\label{close_obj_corollary}
At round $t$, if for all pairs of indices $(i,j)$, we have either:
\begin{enumerate}
\item $|\hat{d}_{i,j}^t - \bar{d}_{i,j}^t| \le \epsilon$ or
\item $v_{i,j}^t=0$ (corresponding to an LP constraint that is not tight)
\end{enumerate}
then:
$$\langle r^t, \pi(r^t,\bar{d}^t) \rangle - \langle r^t, \pi(r^t, \hat{d}^t) \rangle \le \epsilon k^3$$
for any vector $r^t$ with $||r^t||_\infty \leq 1$.
\end{lemma}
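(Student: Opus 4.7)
The plan is to construct an intermediate distance vector $\tilde d$ satisfying two properties: (i) $LP(r^t, \tilde d)$ has the same optimal value as $LP(r^t, \hat d^t)$, and (ii) $\tilde d_{i,j} \ge \bar d^t_{i,j} - \epsilon$ for every pair $(i,j)$. Given such a $\tilde d$, I would transform $\bar d^t$ into $\tilde d$ one coordinate at a time: coordinates where $\tilde d_{i,j} \ge \bar d^t_{i,j}$ only relax constraints and so do not decrease the LP optimum, while coordinates where $\tilde d_{i,j} < \bar d^t_{i,j}$ correspond to a decrease of size at most $\epsilon$ by property (ii). Lemma \ref{close_obj} bounds the cost of each such decrease by $\epsilon \sum_i r^t_i \le \epsilon k$ (using $\|r^t\|_\infty \le 1$), and summing over the at most $k^2$ pairs gives a total decrease of at most $\epsilon k^3$ in the LP value. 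Combined with property (i), this yields $\langle r^t, \pi(r^t, \bar d^t) \rangle - \langle r^t, \pi(r^t, \hat d^t) \rangle \le \epsilon k^3$.

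To construct $\tilde d$, I would set $\tilde d_{i,j} = \hat d^t_{i,j}$ for pairs with $v^t_{i,j} = 1$, and $\tilde d_{i,j} = \max(\bar d^t_{i,j}, \hat d^t_{i,j})$ for pairs with $v^t_{i,j} = 0$. Property (ii) is immediate: for $v^t_{i,j} = 1$ the hypothesis gives $\tilde d_{i,j} = \hat d^t_{i,j} \ge \bar d^t_{i,j} - \epsilon$, and for $v^t_{i,j} = 0$ we have $\tilde d_{i,j} \ge \bar d^t_{i,j}$ directly from the construction. Property (i) is the main non-trivial step: since $v^t_{i,j} = 0$ means that the constraint $|p_i - p_j| \le \hat d^t_{i,j}$ is slack at $\pi(r^t, \hat d^t)$, relaxing it to $\tilde d_{i,j} \ge \hat d^t_{i,j}$ leaves $\pi(r^t, \hat d^t)$ both feasible and optimal. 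I would prove this by a convex combination argument: if some $p'$ were feasible for the relaxed LP with $\langle r^t, p' \rangle > \langle r^t, \pi(r^t, \hat d^t) \rangle$, then for sufficiently small $\alpha > 0$ the point $(1 - \alpha)\pi(r^t, \hat d^t) + \alpha p'$ would still be feasible for $LP(r^t, \hat d^t)$ (since $\pi(r^t, \hat d^t)$ satisfies the $(i,j)$ constraint with positive slack) while strictly improving on $\pi(r^t, \hat d^t)$, contradicting its optimality. Iterating this observation over all $v^t_{i,j} = 0$ pairs simultaneously establishes property (i).

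The main obstacle is proving property (i) cleanly in the presence of potential LP degeneracy; the convex combination argument sidesteps this without invoking LP duality. The rest of the proof is a careful accounting of how many LP coordinates must be decreased and by how much, followed by a telescoping application of Lemma \ref{close_obj}.
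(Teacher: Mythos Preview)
Your proposal is correct and follows essentially the same approach as the paper: construct an intermediate distance vector by relaxing only the slack constraints of $LP(r^t,\hat d^t)$ (so that its optimum coincides with $\langle r^t,\pi(r^t,\hat d^t)\rangle$), then walk from $\bar d^t$ to this vector coordinate by coordinate, applying Lemma~\ref{close_obj} to bound the loss from each decrease. The paper inserts one additional intermediate vector (the coordinate-wise maximum of $\bar d^t$ and $\hat d^t$) before walking down to your $\tilde d$, whereas you walk directly from $\bar d^t$ and simply note that upward coordinate changes do not hurt; your convex-combination argument for property~(i) is also a more explicit justification of a step the paper asserts in one line, but the substance is identical.
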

\begin{proof}
First, define $\tilde{d}^t$ to be the coordinate-wise maximum of $\hat{d}^t$ and $\bar{d}^t$: i.e. the vector such that for every pair of coordinates $i,j$, $\tilde{d}_{ij} = \max(\bar{d}_{ij},\hat{d}_{ij})$. To simplify notation, we will write $\hat{p} =  \pi(r^t, \hat{d}^t)$, $\bar{p} =  \pi(r^t, \bar{d}^t)$, and $\tilde{p} =  \pi(r^t, \tilde{d}^t)$.

We make three relevant observations:
\begin{enumerate}
\item  First, because $LP(r^t, \tilde{d}^t)$ is a relaxation of $LP(r^t, \bar{d}^t)$, it has only larger objective value. In other words, we have that $\langle r^t, \tilde{p} \rangle \geq \langle r^t, \bar{p} \rangle$. Thus, it suffices to prove that $\langle r^t, \hat{p} \rangle \geq \langle r^t, \tilde{p} \rangle  - \epsilon k^3$.
    \item Second, for all pairs $i,j$, $|\hat{d}_{i,j}^t - \tilde{d}_{i,j}^t| \leq |\hat{d}_{i,j}^t - \bar{d}_{i,j}^t|$. Thus, if we had $|\hat{d}_{i,j}^t - \bar{d}_{i,j}^t| \le \epsilon$, we also have $|\hat{d}_{i,j}^t - \tilde{d}_{i,j}^t| \le \epsilon$.
    \item Finally, by construction, for every pair $(i,j)$, we have $\tilde{d}_{ij} \geq \hat{d}_{ij}$
\end{enumerate}

Let $S_1$ be the set of indices $(i,j)$ such that $|\hat{d}_{i,j}^t - \tilde{d}_{i,j}^t| \le \epsilon$, and let $S_2$ be the set of indices $(i,j)\not\in S_1$ such that $v_{i,j}^t=0$. Note that by assumption, these partition the space, and that by construction, for every $(i,j) \in S_2$, the corresponding constraint in $LP(r^t, \hat{d}^t)$ is not tight: i.e. $|\hat{p}_i - \hat{p}_j| < \hat{d}^t_{i,j}$. Let $d^*$ be the vector such that for all $(i,j) \in S_1$, $d^*_{ij} = \hat{d}_{ij}$, and for all $(i,j) \in S_2$, $d^*_{ij} = \tilde{d}_{ij}$. Observe that $LP(r^t, d^*)$ corresponds to a relaxation of $LP(r^t, \hat{d})$ in which \emph{only constraints that were already slack were relaxed}. As a result, $\hat{p}$ is also an optimal solution to $LP(r^t, d^*)$. Note also that by construction, we now have that for \emph{every} pair $(i,j)$:  $|\tilde{d}_{ij} - d^*_{ij}| \le \epsilon$

Our argument will proceed by describing a sequence of $n+1 = k^2+1$ vectors $p^0,p^1,\ldots,p^n$ such that $p^0 = \tilde{p}$, $p^n$ is a feasible solution to $LP(r^t,d^*)$, and for all adjacent pairs $p^{\ell}, p^{\ell+1}$, we have: $\langle r^t, p^{\ell+1} \rangle \geq \langle r^t, p^{\ell} \rangle - \epsilon k$. Telescoping these inequalities yields:
$$\langle r^t, \hat{p} \rangle \geq  \langle r^t, p^n \rangle \geq  \langle r^t, \tilde p \rangle - k^3 \epsilon $$
which will complete the proof.

To finish the argument, fix an arbitrary ordering on the indices $(i,j) \in [k] \times [k]$, which we denote by $(i_1,j_1),\ldots,(i_n,j_n)$. Define the distance vector $d^\ell$ such that:
 $$d^{\ell}_{i_a,j_a} = \left\{
                            \begin{array}{ll}
                              \tilde d_{i_a,j_a}, & \hbox{If $a > \ell$;} \\
                              d^*_{i_a,j_a}, & \hbox{If $a \leq \ell$.}
                            \end{array}
                          \right.$$
Note that the sequence of distance vectors $d^1,\ldots,d^n$ ``walks between'' $\tilde d$ and $d^*$ one coordinate at a time. Now let $p^{\ell} = \pi(r^t, d^\ell)$. By construction, we have that every pair $(d^{\ell}, d^{\ell+1})$ differ in only a single coordinate, and that the difference has magnitude at most $\epsilon$. Therefore, we can apply Lemma \ref{close_obj} to conclude that:
$$\langle r^t, p^{\ell+1} \rangle \geq \langle r^t, p^{\ell} \rangle - \epsilon \sum_{i=1}^k r^t_i \geq \langle r^t, p^{\ell} \rangle - \epsilon k$$
as desired.
\end{proof}

Finally, we have all the pieces we need to prove a regret bound for  $\algo_{\textrm{known}-\theta}$.

\begin{theorem}
\label{thm:known-regret}
For any time horizon $T$:
$$ \mathbf{Regret}(\algo_{\textrm{known}-\theta},T) \le O\left(k^2d^2 \log\left(\frac{d \cdot ||A^\top A||_F }{\epsilon}\right) + k^3\epsilon T\right) $$
\cj{This is a little different in its notation from the preliminary. Should be consistent. introduced this short-hand version in the preliminary}
Setting $\epsilon = O(1/(k^3 T))$ yields a regret bound of $O(d^2\log(||A^\top A||_F\cdot dkT))$.
\end{theorem}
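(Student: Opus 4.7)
The plan is to decompose the total regret as a sum of per-round regrets and partition the $T$ rounds according to whether the hypothesis of Lemma~\ref{close_obj_corollary} is satisfied. Call a round $t$ \emph{good} if, for every pair $(i,j)$, either $v^t_{i,j}=0$ (the $(i,j)$ constraint in $LP(\bar r^t,\hat d^t)$ was slack) or the distance estimate satisfies $|\hat d^t_{i,j}-\bar d^t_{i,j}|\le\epsilon$; otherwise call it \emph{bad}. The key observation that makes the argument tight is that the complement of the hypothesis of Lemma~\ref{close_obj_corollary} is \emph{exactly} the event counted in Corollary~\ref{de_corollary}: a round is bad if and only if there exists a pair $(i,j)$ with $v^t_{i,j}=1$ and $|\hat d^t_{i,j}-\bar d^t_{i,j}|>\epsilon$.

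Formally, the per-round regret is $\langle\bar r^t,\pi(\bar r^t,\bar d^t)\rangle-\langle\bar r^t,\pi^t\rangle$, where $\pi^t=\pi(\bar r^t,\hat d^t)$ by construction of $\algo_{\textrm{known}-\theta}$. Because $\|x^t_i\|_2\le 1$ and $\|\theta\|_2\le 1$, each coordinate of $\bar r^t$ is bounded in absolute value by $1$, so $\|\bar r^t\|_\infty\le 1$. On each good round I invoke Lemma~\ref{close_obj_corollary} with this $\bar r^t$ to bound the per-round regret by $\epsilon k^3$. On each bad round I bound the per-round regret trivially: since $\pi(\bar r^t,\bar d^t)$ and $\pi^t$ are sub-probability distributions and $\|\bar r^t\|_\infty\le 1$, the per-round regret is at most $2$.

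Combining the two cases, the total regret is at most $\epsilon k^3\cdot|G|+2\cdot|B|$, where $G$ and $B$ denote the sets of good and bad rounds. Using $|G|\le T$ for the good rounds and Corollary~\ref{de_corollary} to bound $|B|=O(k^2 d^2\log(d\|A^\top A\|_F/\epsilon))$ for the bad rounds yields the claimed regret bound. Setting $\epsilon=\Theta(1/(k^3 T))$ makes the two terms comparable in $T$ and simplifies to the stated closed form. No step is really an obstacle: the hard technical work (the surgery argument in Lemma~\ref{close_obj} and the mistake bound in Corollary~\ref{de_corollary}) has already been done, and the remaining argument is bookkeeping that carefully matches the ``good round'' hypothesis of Lemma~\ref{close_obj_corollary} to the complement of the event bounded by Corollary~\ref{de_corollary}.
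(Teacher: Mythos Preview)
Your proposal is correct and is essentially identical to the paper's proof: the paper partitions rounds into the same sets $S_1$ (your bad rounds) and $S_2$ (your good rounds), applies Corollary~\ref{de_corollary} to bound $|S_1|$ and Lemma~\ref{close_obj_corollary} to bound the per-round regret on $S_2$, then combines. The only cosmetic difference is that the paper bounds the per-round regret on bad rounds by $1$ while you use $2$, which is immaterial inside the $O(\cdot)$.
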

\begin{proof}
We partition the rounds $t$ into two types. Let $S_1$ denote the rounds such that there is at least one pair of indices $(i,j)$ such that one instance $\DE_{ij}$ produced an estimate that had error more than $\epsilon$, and it was provided feedback. We let $S_2$ denote the remaining rounds, for which for \emph{every} pair of indices $(i,j)$, \emph{either} $\DE_{ij}$ produced an estimate that had error at most $\epsilon$, or $\DE_{ij}$ was not given feedback.
$$S_1 = \{t : \exists (i,j) : |\hat{d}^t_{ij} - \bar{d}^t_{ij}| > \epsilon \ \mathrm{and} \ v^t_{ij} = 1\} \ \ \ S_2 = \{t : \forall (i,j) : |\hat{d}^t_{ij} - \bar{d}^t_{ij}| \leq \epsilon \ \mathrm{or} \ v^t_{ij} = 0\}$$

Observe that $S_1$ and $S_2$ partition the set of all rounds. Next, observe that Corollary \ref{de_corollary} tells us that:
$$|S_1| \leq  O\left(k^2d^2 \log\left(\frac{d \cdot ||A^\top A||_F }{\epsilon}\right)\right)$$
and Lemma \ref{close_obj_corollary} tells us that for every round $t \in S_2$, the per-round regret is at most $\epsilon k^3$. Together with the facts that $|S_2| \leq T$ and that the per-round regret for any $t \in S_1$ is at most $1$, we obtain:
$$ \mathbf{Regret}(\algo_{\textrm{known}-\theta},T) \le O\left(k^2d^2 \log\left(\frac{d \cdot ||A^\top A||_F }{\epsilon}\right) + k^3\epsilon T\right) $$

\end{proof}

\section{The Full Algorithm}
\label{sec:full}

In this section, we present our final algorithm, which has no knowledge of either the distance function $d$ or the linear objective $\theta$. The resulting algorithm shares many similarities with the algorithm we developed in Section \ref{sec:knownobj}, and so much of the analysis can be reused.

\subsection{Outline of the Solution}
At a high level, our plan will be to combine the techniques we developed in Section \ref{sec:knownobj} with a standard ``optimism in the face of uncertainty'' strategy for learning the parameter vector $\theta$. Our algorithm will maintain a ridge-regression estimate $\tilde \theta$ together with confidence regions derived in \cite{Abbasi-YadkoriPS11}. After it observes the contexts $x_i^t$ at round $t$, it uses these to derive upper confidence bounds on the expected rewards, corresponding to each context --- represented as a vector $\hat{r}^t$. The algorithm continues to maintain distance estimates $\hat{d}^t$ using the $\DE$ subroutines, identically to how they were used in Section \ref{sec:knownobj}. At ever round, the algorithm then chooses its action according to the distribution $\pi^t = \pi(\hat{r}^t, \hat{d}^t)$.

The regret analysis of the algorithm follows by decomposing the per-round regret into two pieces. The first can be bounded by the sum of the \emph{expected widths} of the confidence intervals corresponding to each context $x_i^t$ that might be chosen at each round $t$, where the expectation is over the randomness of the algorithm's distribution $\pi^t$. A theorem of \cite{Abbasi-YadkoriPS11} bounds the sum of the widths of the confidence intervals corresponding to arms \emph{actually chosen} by the algorithm (Lemma \ref{sum_ci}). Using a martingale concentration inequality, we are able to relate these two quantities (Lemma \ref{ci_converg}). We show that the second piece of the regret bound can be manipulated into a form that can be bounded using Lemmas \ref{fairloss} and \ref{close_obj_corollary} from Section \ref{sec:knownobj} (Theorem \ref{thm:lastregret}).

\subsection{Confidence Intervals from \cite{Abbasi-YadkoriPS11}}
We would like to be able to construct confidence intervals at each round $t$ around each arm's expected reward such that for each arm $i$, with probability $1-\delta$, $\bar{r}_i^t \in [\tilde{r}_i^t+w_i^t, \tilde{r}_i^t+w_i^t]$, where $\tilde{r}^t_i$ is our ridge-regression estimate of $\bar{r}^t_i$ and $w^t_i$ is the confidence interval width around the estimate. Our algorithm will make use of such confidence intervals for the ridge regression estimator derived and analyzed in \cite{Abbasi-YadkoriPS11}, which we recount here.

Let $\tilde{V}^t = {X^t}^\top X^t + \lambda I$ be a regularized design matrix, where $X^t = [x^1_{i_1}, \hdots, x^{t-1}_{i_{t-1}}]$ represents all the contexts whose rewards we have observed up to but not including time $t$. Let $Y^t = [r^1_{i_1}, \hdots, r^{t-1}_{i_{t-1}}]$ be the corresponding vector of observed rewards. $\tilde{\theta} = (V^t)^{-1}{X^t}^\top Y^t$ is the (ridge regression) regularized least squares estimator we use at time $t$. We write $\tilde{r}^t_i = \langle \tilde{\theta}, x^t_i \rangle$ for the reward point prediction that this estimator makes at time $t$ for arm $i$. \ar{Changing from $\tilde \theta$ to $\tilde \theta$ since the linear parameter vector is called $\theta$. }

We can construct the following confidence intervals around $\tilde{r}^t$:
\begin{lemma}[\cite{Abbasi-YadkoriPS11}]\label{ci_lemma}
With probability $1-\delta$,
$$|\bar{r}^t_i - \tilde{r}^t_i |= |\langle x^t_i, (\theta-\tilde{\theta}) \rangle| \le \| x_{i}^t\|_{(\bar{V^t})^{-1}} \Big( \sqrt{2d \log(\frac{1+t/\lambda}{\delta})} + \sqrt{\lambda}\Big)$$
where $||x||_{A} = \sqrt{x^\top A x}$
\end{lemma}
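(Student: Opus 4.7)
The plan is to decompose the estimation error into a bias term arising from the regularizer and a variance term arising from the stochastic rewards, then bound each separately. Writing $Y^t = X^t\theta + \eta^t$, where $\eta^t$ is the vector of subgaussian noise terms $r^s_{i_s} - \langle x^s_{i_s},\theta\rangle$ for $s < t$, and substituting into the ridge estimator $\tilde\theta = (\tilde V^t)^{-1}(X^t)^\top Y^t$ gives
$$\tilde\theta - \theta = -\lambda(\tilde V^t)^{-1}\theta + (\tilde V^t)^{-1}(X^t)^\top \eta^t.$$
Taking the inner product with $x_i^t$ and applying Cauchy--Schwarz in the $(\tilde V^t)^{-1}$-norm, both summands factor through the width $\|x_i^t\|_{(\tilde V^t)^{-1}}$ that appears in the statement, so the task reduces to controlling the dual norms of the two remaining vectors.

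Next, I would handle the bias. Since $\tilde V^t \succeq \lambda I$, we have $(\tilde V^t)^{-1} \preceq \lambda^{-1} I$, so $\|\theta\|_{(\tilde V^t)^{-1}} \le \|\theta\|_2/\sqrt\lambda \le 1/\sqrt\lambda$ using the standing assumption $\|\theta\|_2 \le 1$. Multiplying by the leading $\lambda$ gives the $\sqrt\lambda \cdot \|x_i^t\|_{(\tilde V^t)^{-1}}$ contribution to the bound. This step is elementary once the decomposition is in place.

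The hard part is bounding the noise contribution $\|(X^t)^\top \eta^t\|_{(\tilde V^t)^{-1}}$ with high probability. The obstacle is that the selected contexts $x^s_{i^s}$ are themselves chosen based on the history, so $(X^t)^\top\eta^t$ is a vector-valued martingale whose covariance is exactly $X^{t\top}X^t = \tilde V^t - \lambda I$, i.e.\ it is self-normalized. I would invoke the self-normalized martingale tail inequality of \cite{Abbasi-YadkoriPS11} (the method of mixtures: build the exponential supermartingale $\exp(\langle\beta,S^s\rangle - \tfrac12\|\beta\|^2_{V^s - \lambda I})$, mix against a Gaussian prior on $\beta$ to get a scalar supermartingale, and apply Ville's maximal inequality), which yields that with probability at least $1-\delta$, for all $t$ simultaneously,
$$\|(X^t)^\top \eta^t\|_{(\tilde V^t)^{-1}} \le \sqrt{2\log\!\left(\tfrac{\det(\tilde V^t)^{1/2}\lambda^{-d/2}}{\delta}\right)}.$$

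Finally, I would simplify the log-determinant factor by the trace bound $\det(\tilde V^t) \le (\mathrm{tr}(\tilde V^t)/d)^d \le (\lambda + t/\lambda)^d$ (using $\|x^s_{i_s}\|_2 \le 1$ to bound the trace), which turns the right-hand side into $\sqrt{2d\log((1+t/\lambda)/\delta)}$ up to the constants in the statement. Summing the bias and noise contributions produces the stated inequality. Since essentially every step is a direct quotation of results from \cite{Abbasi-YadkoriPS11}, my ``proof'' is really a reduction: decompose, bound the bias by elementary linear algebra, and invoke their Theorem~2 as a black box for the variance term.
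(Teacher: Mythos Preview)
The paper does not give its own proof of this lemma; it is quoted directly from \cite{Abbasi-YadkoriPS11} and used as a black box. Your proposal correctly reconstructs the standard argument from that reference (ridge-error decomposition into bias plus self-normalized noise, Cauchy--Schwarz in the $(\tilde V^t)^{-1}$ norm, the method-of-mixtures martingale bound, and the AM--GM/trace bound on the determinant), so it is both correct and consistent with how the paper treats the result. One cosmetic slip: in your determinant step you wrote $(\lambda + t/\lambda)^d$ where you presumably meant $(\lambda + t/d)^d$ (so that after dividing by $\lambda^d$ you get $(1+t/(d\lambda))^d \le (1+t/\lambda)^d$), but the final bound you state matches the lemma.
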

Therefore, the confidence interval widths we use in our algorithm will be
$$w^t_i = \min(\| x_{i}^t\|_{(\bar{V^t})^{-1}} \Big( \sqrt{2d \log(\frac{1+t/\lambda}{\delta})} + \sqrt{\lambda}\Big), 1)$$ (expected rewards are bounded by $1$ in our setting, and so the minimum maintains the validity of the confidence intervals).  The upper confidence bounds we use to compute our distribution over arms will be $\hat{r}^t_i = \tilde{r}^t_i + w^t_i$. We will write $w^t = [w^t_1, \hdots, w^t_k]$ to denote the vector of confidence interval widths at round $t$.

Little can be said about the widths of these confidence intervals in isolation. However, the following theorem bounds the \emph{sum} (over time) of the widths of the confidence intervals around the contexts actually selected.
\begin{lemma}[\cite{Abbasi-YadkoriPS11}]
\label{sum_ci}
$$  \sum_{t=1}^T w_{i^t}^t \le \sqrt{2d \log\big(1+ \frac{T}{d\lambda} \big)} \Big(\sqrt{2dT \log(\frac{1+T/\lambda}{\delta})} + \sqrt{T\lambda} \Big)$$
\end{lemma}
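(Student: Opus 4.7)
The plan is to reproduce the standard ``elliptical potential'' argument from \cite{Abbasi-YadkoriPS11}: factor out the (monotone) confidence-radius scalar, apply Cauchy--Schwarz to trade a sum of norms for $\sqrt{T}$ times a sum of squared norms, and then bound the resulting sum of squared self-normalized norms by a log-determinant ratio.

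First I would introduce the shorthand $\beta_t := \sqrt{2d\log((1+t/\lambda)/\delta)} + \sqrt{\lambda}$, which is non-decreasing in $t$, so that $w^t_{i^t} \le \beta_T \cdot \min(\|x^t_{i^t}\|_{(\tilde V^t)^{-1}}, 1)$. Applying Cauchy--Schwarz then gives
\begin{equation*}
\sum_{t=1}^T w^t_{i^t} \;\le\; \beta_T \sqrt{T \sum_{t=1}^T \min\!\big(\|x^t_{i^t}\|^2_{(\tilde V^t)^{-1}},\,1\big)}.
\end{equation*}
The whole problem thus reduces to controlling $\sum_t \min(\|x^t_{i^t}\|^2_{(\tilde V^t)^{-1}},1)$.

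For this second step, I would use the rank-one update $\tilde V^{t+1} = \tilde V^t + x^t_{i^t}(x^t_{i^t})^\top$ together with the matrix determinant lemma to get $\det(\tilde V^{t+1}) = \det(\tilde V^t)\cdot(1 + \|x^t_{i^t}\|^2_{(\tilde V^t)^{-1}})$. Combined with the elementary inequality $\min(y,1)\le 2\log(1+y)$ for $y\ge 0$, this telescopes into
\begin{equation*}
\sum_{t=1}^T \min\!\big(\|x^t_{i^t}\|^2_{(\tilde V^t)^{-1}},\,1\big) \;\le\; 2\,\log\frac{\det(\tilde V^{T+1})}{\det(\lambda I)}.
\end{equation*}
Bounding the numerator by AM--GM on the eigenvalues of $\tilde V^{T+1}$ and using $\|x^t_i\|\le 1$ gives $\det(\tilde V^{T+1})\le (\lambda + T/d)^d$, so the log-determinant ratio is at most $d\log(1+T/(d\lambda))$, yielding the key bound
\begin{equation*}
\sum_{t=1}^T \min\!\big(\|x^t_{i^t}\|^2_{(\tilde V^t)^{-1}},\,1\big) \;\le\; 2d\log\!\big(1+\tfrac{T}{d\lambda}\big).
\end{equation*}

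Finally, I would plug this back into the Cauchy--Schwarz bound and distribute $\sqrt{T}$ across $\beta_T$ to recover precisely $\sqrt{2d\log(1+T/(d\lambda))}\bigl(\sqrt{2dT\log((1+T/\lambda)/\delta)}+\sqrt{T\lambda}\bigr)$, matching the statement. The main obstacle is the determinant manipulation in the elliptical-potential step, since it is the only non-routine calculation; the remaining steps are just Cauchy--Schwarz and monotonicity of $\beta_t$. Since this result is lifted verbatim from \cite{Abbasi-YadkoriPS11}, the cleanest write-up would simply cite their Lemma 11 (elliptical potential) and Theorem 2 rather than reproducing the determinantal argument from scratch.
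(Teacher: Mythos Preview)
Your proposal is correct and faithfully reproduces the standard elliptical-potential argument of \cite{Abbasi-YadkoriPS11}. Note, however, that the present paper does not actually prove this lemma at all: it is stated purely as a citation of \cite{Abbasi-YadkoriPS11}, with no accompanying proof. So your closing remark---that the cleanest write-up is simply to cite Lemma~11 and Theorem~2 of \cite{Abbasi-YadkoriPS11}---is exactly what the paper does, and your detailed reconstruction, while correct, goes beyond what is required here.
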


\subsection{The Algorithm}
The pseudocode for the full algorithm is given in Algorithm \ref{alg:full}.

\begin{algorithm2e}
\label{alg:full}

  \caption{$\algo_\textrm{full}$}
   \For{$i,j = 1, \hdots, k$}{
   	$\DE_{ij} = \DE(\epsilon^2)$
   }
   \For{$t = 1, \hdots, T$}{
   receive the contexts $x^t = (x_1^t, \hdots, x_k^t)$\\
   $X^t = [x^1, \hdots, x^{t-1}]$  \\
   $Y^t = [r^t, \hdots, r^{t-1}]$ \\
   $\tilde{V}^t = {X^t}^\top X^t + \lambda I$\\
   $\tilde{\theta} = (V^t)^{-1}{X^t}^\top Y^t$\\
   \For{$i = 1, \hdots, k$}{
   	$\tilde{r}^t_i = \langle \tilde{\theta}, x^t_i\rangle$\\
	  $w^t_i = \min\left(\| x_{i}^t\|_{(\bar{V^t})^{-1}} \Big( \sqrt{2d \log(\frac{1+t/\lambda}{\delta})} + \sqrt{\lambda}\Big),1\right)$\\
	 $\hat{r}^t_i= \tilde{r}^t_i + w^t_i$
   }
   \For{$i,j = 1, \hdots, k$}{
   	$u^t_{i,j} = flatten((x_i^t - x_j^t)(x_i^t-x_j^t)^T))$\\
   	$g^t_{i,j} = \DE_{i,j}.guess(u^t_{i,j})$\\
	$\hat{d^t_{ij}} = \sqrt{g^t_{i,j}}$ \\
   }
   $\pi^t = \pi(\hat{r}^t, \hat{d}^t)$\\	
   Pull an arm $i^t$ according to $\pi^t$ and receive a reward $r^t_{i^{t}}$\\
   $S = \bm{O}_d(x^t, \pi^t)$\\
   $R = \{(i,j) | (i,j) \notin S \land |\pi^t_i - \pi^t_j| = \hat{d}_{i,j}^t\}$\\
   \For{$(i,j) \in S$}{
   	$\DE_{i,j}.feedback(\bot)$\\
	$v_{i,j}^t = 1$
   }
   \For{$(i,j) \in R$}{
   	$\DE_{i,j}.feedback(\top)$\\
	$v_{i,j}^t = 1$
   }
  }
\end{algorithm2e}

In our proof of Theorem \ref{thm:lastregret}, we will connect the regret of $\algo_{full}$ to the sum of the \emph{expected} widths of the confidence intervals pulled at each round. In contrast, what is bounded by Lemma \ref{sum_ci} is the sum of the \emph{realized} widths. Using the Azuma Hoeffding inequality, we can relate these two quantities.

\begin{lemma}[Azuma-Hoeffding inequality (\cite{hoeffding1963probability})]
Suppose $\{X_k: k= 0, 1, 2, 3, \hdots \}$ is a martingale and $$\abs{X_k - X_{k-1}} < c_k.$$ Then, for all positive integers $N$ and all positive reals $t$,

$$\Pr(X_N - X_0 \ge t) \le \exp(\frac{t^2}{2\sum_{k=1}^N c_k^2})$$
\end{lemma}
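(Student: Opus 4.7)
The plan is to apply the standard exponential moment (Chernoff) method to the sum of martingale differences. First, for any $\lambda > 0$, use Markov's inequality on the non-negative random variable $e^{\lambda(X_N - X_0)}$ to obtain
\[
\Pr(X_N - X_0 \ge t) \;\le\; e^{-\lambda t}\,\mathbb{E}\bigl[e^{\lambda(X_N - X_0)}\bigr].
\]
This reduces the problem to bounding the moment generating function of the telescoping sum $X_N - X_0 = \sum_{k=1}^N D_k$, where $D_k := X_k - X_{k-1}$ are the martingale differences satisfying $|D_k| < c_k$ and $\mathbb{E}[D_k \mid \mathcal{F}_{k-1}] = 0$ (the latter by the martingale property, where $\mathcal{F}_{k}$ denotes the natural filtration).

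Second, I would iteratively peel off one difference at a time via the tower property: conditioning on $\mathcal{F}_{N-1}$ and noting that $X_{N-1} - X_0$ is $\mathcal{F}_{N-1}$-measurable,
\[
\mathbb{E}\bigl[e^{\lambda(X_N - X_0)}\bigr]
= \mathbb{E}\Bigl[e^{\lambda(X_{N-1}-X_0)}\,\mathbb{E}\bigl[e^{\lambda D_N}\bigm|\mathcal{F}_{N-1}\bigr]\Bigr].
\]
The key tool here is Hoeffding's lemma: any mean-zero random variable $Y$ bounded in magnitude by $c$ satisfies $\mathbb{E}[e^{\lambda Y}] \le e^{\lambda^2 c^2 / 2}$. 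Applying this conditionally to $D_N$ given $\mathcal{F}_{N-1}$ yields $\mathbb{E}[e^{\lambda D_N} \mid \mathcal{F}_{N-1}] \le e^{\lambda^2 c_N^2 / 2}$. Iterating the same argument back to $k=1$ gives
\[
\mathbb{E}\bigl[e^{\lambda(X_N - X_0)}\bigr] \;\le\; \exp\!\left(\frac{\lambda^2}{2}\sum_{k=1}^N c_k^2\right).
\]

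Finally, combining the two bounds gives $\Pr(X_N - X_0 \ge t) \le \exp\bigl(-\lambda t + \tfrac{\lambda^2}{2}\sum_k c_k^2\bigr)$, and optimizing over $\lambda > 0$ by taking $\lambda = t / \sum_{k=1}^N c_k^2$ produces the stated tail bound. The main technical ingredient is Hoeffding's lemma, which itself follows from a convexity argument: for $|Y|\le c$ one writes $Y = \tfrac{c-Y}{2c}(-c) + \tfrac{c+Y}{2c}(c)$, applies convexity of $y \mapsto e^{\lambda y}$, takes expectations, and then bounds the resulting function of $\lambda c$ by $e^{\lambda^2 c^2/2}$ via a Taylor expansion of its logarithm. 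This is the one place where a genuine (but standard) calculation is needed; everything else is bookkeeping with conditional expectations and the martingale property.
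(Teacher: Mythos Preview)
Your argument is the standard Chernoff--Hoeffding proof and is correct. Note, however, that the paper does not give its own proof of this lemma: it is simply quoted from \cite{hoeffding1963probability} as a known tool and then applied in the next lemma. So there is no ``paper's approach'' to compare against; your write-up is exactly the classical derivation one would find in a probability textbook.

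One small remark: the optimization you perform yields
\[
\Pr(X_N - X_0 \ge t)\;\le\;\exp\!\Bigl(-\tfrac{t^2}{2\sum_{k=1}^N c_k^2}\Bigr),
\]
with a negative sign in the exponent. The lemma as stated in the paper is missing that minus sign (a typo), so your computation actually gives the correct inequality rather than the one literally written. The subsequent application in the paper (setting $\epsilon = \sqrt{2T\ln(1/\delta)}$ to get failure probability $\delta$) only makes sense with the negative sign, confirming this is a transcription error in the statement rather than a different claim you would need to match.
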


\begin{lemma}
\label{ci_converg}
$$\Pr\left(\sum_{t=1}^T \mathbb{E}_{i \sim \pi^t} [ w^t_i ] - \sum_{t=1}^T w^t_{i^t} \geq \sqrt{2T \log\frac{1}{\delta}}\right) \le \delta $$
\end{lemma}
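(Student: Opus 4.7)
The plan is to recognize the left-hand difference as the terminal value of a bounded martingale and invoke Azuma--Hoeffding. Define, for each round $t$, the random variable
\[
X_t \;=\; \mathbb{E}_{i \sim \pi^t}[w^t_i] \;-\; w^t_{i^t},
\]
and let $S_n = \sum_{t=1}^n X_t$, with $S_0 = 0$. Let $\mathcal{F}_t$ be the $\sigma$-algebra generated by the history $h^{t+1} = ((x^1,i^1,r^1),\ldots,(x^t,i^t,r^t))$. I first want to check that $\{S_n\}$ is a martingale with respect to $\{\mathcal{F}_n\}$. Conditioned on $\mathcal{F}_{t-1}$ together with the freshly observed context profile $x^t$, both the distribution $\pi^t$ (computed deterministically from $h^t$ and $x^t$) and the vector of widths $w^t = (w^t_1,\ldots,w^t_k)$ (a function of $X^t$, $Y^t$ and $x^t$) are fully determined; the only residual randomness in $X_t$ is the draw $i^t \sim \pi^t$. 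Hence $\mathbb{E}[w^t_{i^t} \mid \mathcal{F}_{t-1}, x^t] = \mathbb{E}_{i \sim \pi^t}[w^t_i]$, which gives $\mathbb{E}[X_t \mid \mathcal{F}_{t-1}] = 0$ after integrating out $x^t$ (which is chosen adversarially but in a way that is measurable with respect to $\mathcal{F}_{t-1}$).

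Next I check the bounded-differences hypothesis. By construction, $w^t_i = \min(\cdot,\, 1) \in [0,1]$ for every $i$ and $t$, so $\mathbb{E}_{i \sim \pi^t}[w^t_i] \in [0,1]$ and $w^t_{i^t} \in [0,1]$. Consequently $|X_t| \le 1$, so $|S_t - S_{t-1}| \le c_t$ with $c_t = 1$ for all $t$.

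Applying the Azuma--Hoeffding inequality as stated in the excerpt with $N = T$, $c_k = 1$, and threshold $\tau = \sqrt{2T \log(1/\delta)}$ yields
\[
\Pr\left(S_T \;\ge\; \sqrt{2T \log\tfrac{1}{\delta}}\right)
\;\le\; \exp\!\left(-\frac{\tau^2}{2\sum_{k=1}^T c_k^2}\right)
\;=\; \exp\!\left(-\frac{2T \log(1/\delta)}{2T}\right)
\;=\; \delta,
\]
which is exactly the claim. The main obstacle, if any, is conceptual rather than technical: one must be careful that the adversarial choice of $x^t$ does not spoil the martingale property, which is handled by noting that $x^t$ is chosen before $i^t$ is sampled and that conditioning on $x^t$ preserves the zero conditional mean of $X_t$; beyond this bookkeeping, the proof is a direct one-line application of Azuma--Hoeffding with unit bounded differences.
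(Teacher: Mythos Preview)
Your proposal is correct and follows essentially the same approach as the paper: define the cumulative difference $S_n=\sum_{t\le n}(\mathbb{E}_{i\sim\pi^t}[w_i^t]-w_{i^t}^t)$, verify it is a martingale with increments bounded by $1$ (using that $w_i^t\in[0,1]$ by the explicit truncation), and apply Azuma--Hoeffding with threshold $\sqrt{2T\log(1/\delta)}$. Your treatment of the filtration and of the adversarial choice of $x^t$ is, if anything, a bit more careful than the paper's, but the argument is the same.
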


\begin{proof}
Once $x^1, \hdots, x^{t-1}, r^1_{i^t}, \dots, r^{t-1}_{i^{t-1}}$ and $x^t$ are fixed, $\pi^t$ is fixed.
In other words, for the filtration $\mathscr{F}^t = \sigma(x^1, \hdots, x^{t-1},r^1_{i^t}, \dots, r^{t-1}_{i^{t-1}}, x^t)$, $w^t_{i^t}$ is $\mathscr{F}^t$ measurable.
 Now, define $$D^t = \sum_{s=1}^t \mathbb{E}_{i \sim \pi^{s}}[w_{i}^s] - \sum_{s=1}^t w^s_{i^s}$$
 with respect to $\mathscr{F}^t$. One can think of $D^t$ as the accumulated difference between the confidence width of the arm that was actually pulled and the expected confidence width.  It's easy to see that $\{D^t\}$ is a martingale, as $\mathbb{E}[D^1] = 0$, and $\mathbb{E}[D^{t+1} | \mathscr{F}^t] = D^t$.

 Also, $D_{t} - D_{t-1} = w_{i^t}^{t} - \mathbb{E}_{i \sim \pi^t}[w_i^{t}] \le 1$, since the confidence interval widths are bounded by $1$. \ar{Notation collision: our matrices are called $A$. Lets change this to something else.}\cj{done.}

Applying the Azuma-Hoeffding inequality gives us the following:

\begin{align*}
\Pr(\sum_{t=1}^T \mathbb{E}_{i \sim \pi^t}[w^t_i] - \sum_{t=1}^T w^t_{i^t} \ge \epsilon)  = \Pr(D^T \ge \epsilon ) \le \exp(\frac{-\epsilon^2}{2T}) \end{align*}

Now, setting $\epsilon = \sqrt{2T \ln\frac{1}{\delta}}$ yields:

$$\Pr(\sum_{t=1}^T \mathbb{E}_{i \sim \pi^t} [ w^t_{i} ] - \sum_{t=1}^T w^t_{i^t} \geq \sqrt{2T \log\frac{1}{\delta}}) \le \delta $$
\end{proof}

\begin{theorem}
\label{thm:full-regret}
For any time horizon $T$, with probability $1-\delta$:
$$\mathbf{Regret}(\algo_{full},T) \leq O\left(k^2d^2 \log\left(\frac{d \cdot ||A^\top A||_F }{\epsilon}\right) + k^3\epsilon T + d\sqrt{T}\log(\frac{T}{\delta})\right)$$
If $\epsilon = 1/{k^3 T}$, this is a regret bound of  $O\left(k^2d^2 \log\left(kdT \cdot ||A^\top A||_F \right) + d\sqrt{T}\log(\frac{T}{\delta})\right)$
\label{thm:lastregret}
 \end{theorem}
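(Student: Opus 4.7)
The plan is to bound the per-round regret by decomposing it using the standard ``optimism in the face of uncertainty'' trick, and then separately handle the rounds in which our distance estimates are accurate (on tight constraints) versus those in which they are not. Define the clean event $\mathcal{E}$ under which the confidence intervals of Lemma \ref{ci_lemma} hold uniformly for every arm at every round (using the self-normalized bound of \cite{Abbasi-YadkoriPS11}, this holds with probability $\geq 1-\delta/2$), and let $S_1, S_2$ partition the rounds exactly as in the proof of Theorem \ref{thm:known-regret}. I would then handle $t \in S_1$ trivially by bounding the per-round regret by $1$; Corollary \ref{de_corollary} bounds $|S_1|$ by $O(k^2 d^2 \log(d \cdot \|A^\top A\|_F / \epsilon))$, which gives the first term in the theorem.

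The heart of the proof is the per-round regret in $S_2$. Writing $\pi^*_t = \pi(\bar r^t, \bar d^t)$ and $\pi^t = \pi(\hat r^t, \hat d^t)$, I would decompose
\begin{align*}
\langle \bar r^t, \pi^*_t\rangle - \langle \bar r^t, \pi^t\rangle
&= \bigl(\langle \bar r^t, \pi^*_t\rangle - \langle \hat r^t, \pi^*_t\rangle\bigr)
 + \bigl(\langle \hat r^t, \pi^*_t\rangle - \langle \hat r^t, \pi^t\rangle\bigr)
 + \bigl(\langle \hat r^t, \pi^t\rangle - \langle \bar r^t, \pi^t\rangle\bigr).
\end{align*}
Under $\mathcal{E}$ we have $\bar r^t_i \leq \hat r^t_i = \tilde r^t_i + w^t_i \leq \bar r^t_i + 2 w^t_i$ coordinatewise, so the first bracket is $\leq 0$ and the third is at most $2\,\E_{i\sim \pi^t}[w^t_i]$. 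For the middle bracket, observe that $\pi^*_t$ is feasible for $LP(\hat r^t, \bar d^t)$ (same constraints), so $\langle \hat r^t, \pi^*_t\rangle \leq \langle \hat r^t, \pi(\hat r^t, \bar d^t)\rangle$; then Lemma \ref{close_obj_corollary}, applied with reward vector $\hat r^t$ (whose coordinates satisfy $|\hat r^t_i| \leq 1$ under $\mathcal{E}$), upper bounds $\langle \hat r^t, \pi(\hat r^t, \bar d^t)\rangle - \langle \hat r^t, \pi(\hat r^t, \hat d^t)\rangle$ by $\epsilon k^3$ in every round $t \in S_2$.

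Summing over $t \in S_2$ and combining with the $S_1$ contribution gives
\[
\mathbf{Regret}(\algo_{\text{full}}, T) \;\leq\; O\!\left(k^2 d^2 \log\!\left(\tfrac{d\|A^\top A\|_F}{\epsilon}\right)\right) + \epsilon k^3 T + 2\sum_{t=1}^T \E_{i\sim \pi^t}[w^t_i].
\]
The remaining task is to bound the last sum. I would apply Lemma \ref{ci_converg} with failure probability $\delta/2$ to replace $\sum_t \E_{i\sim \pi^t}[w^t_i]$ by $\sum_t w^t_{i^t} + \sqrt{2T\log(2/\delta)}$, and then invoke Lemma \ref{sum_ci} to bound $\sum_t w^t_{i^t}$ by $O(d\sqrt{T}\log(T/\delta))$. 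A union bound over the two failure events ($\mathcal{E}$ and the Azuma deviation) yields the claimed $1-\delta$ guarantee.

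The main conceptual obstacle is the middle step of the decomposition: unlike the classical linear contextual bandit analysis, our decision at round $t$ is not the single arm that maximizes $\langle \hat r^t, x^t_i\rangle$ but the solution of a linear program whose feasible set also depends on the estimated distances $\hat d^t$. Fortunately, the key lemma for the known-$\theta$ case (Lemma \ref{close_obj_corollary}) is stated for an \emph{arbitrary} reward vector, so it applies verbatim with the optimistic rewards $\hat r^t$ in place of the true expected rewards; this is what allows the machinery from Section \ref{sec:knownobj} to be reused with only the additional UCB bookkeeping described above.
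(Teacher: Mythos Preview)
Your proof is correct and follows the same overall strategy as the paper: partition rounds into $S_1$ and $S_2$ via Corollary~\ref{de_corollary}, handle $S_2$-rounds by combining optimism with Lemma~\ref{close_obj_corollary} applied to the optimistic reward vector $\hat r^t$, and control the accumulated expected confidence widths using Lemmas~\ref{sum_ci} and~\ref{ci_converg}.

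The one place your argument differs is the telescoping step. The paper adds and subtracts $\langle \bar r^t,\pi(\hat r^t,\bar d^t)\rangle$, which after using optimism leaves a width term $\langle 2w^t,\pi(\hat r^t,\bar d^t)\rangle$ evaluated at the distribution built from the \emph{true} distances; it then needs an extra algebraic manipulation (subtracting $\langle \bar r,\cdot\rangle$ from both sides of the Lemma~\ref{close_obj_corollary} inequality) to transfer this to $\langle 2w^t,\pi(\hat r^t,\hat d^t)\rangle$, the distribution actually played. Your three-term split $\langle \bar r^t,\pi^*_t\rangle-\langle\hat r^t,\pi^*_t\rangle+\langle\hat r^t,\pi^*_t\rangle-\langle\hat r^t,\pi^t\rangle+\langle\hat r^t,\pi^t\rangle-\langle\bar r^t,\pi^t\rangle$ lands directly on $\langle 2w^t,\pi^t\rangle$ in the third bracket and handles the middle bracket by first passing through feasibility of $\pi^*_t$ for $LP(\hat r^t,\bar d^t)$; this avoids the paper's detour and is a bit cleaner.

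One small quibble: under $\mathcal{E}$ you only get $|\hat r^t_i|\le 3$, not $\le 1$ (since $\hat r^t_i=\tilde r^t_i+w^t_i$ with $w^t_i\le 1$ and $|\tilde r^t_i-\bar r^t_i|\le w^t_i$, $|\bar r^t_i|\le 1$). This merely changes the constant in front of $\epsilon k^3$ when invoking Lemma~\ref{close_obj_corollary} and is absorbed in the big-$O$; the paper glosses over the same point.
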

 \begin{proof}
We can compute:
\begin{align*}
\mathbf{Regret}(\algo_{full}, T) &= \sum_{t=1}^T \E_{i \sim \pi(\bar{r}^t,\bar{d}^t)} [\bar{r}^t_i] - \sum_{t=1}^T \E_{i \sim \pi(\hat{r}^t, \hat{d}^t)} [\bar{r}^t_i] \\
&= \sum_{t=1}^T \langle \bar{r}^t, \pi(\bar{r}^t,\bar{d}^t) \rangle - \langle \bar{r}^t, \pi(\hat{r}^t, \hat{d}^t) \rangle \\
&= \sum_{t=1}^T \langle \bar{r}^t, \pi(\bar{r}^t,\bar{d}^t) \rangle  -\langle {\bar{r}}^t , \pi(\hat{r}^t, \bar{d}^t)\rangle + \langle {\bar{r}}^t , \pi(\hat{r}^t, \bar{d}^t)\rangle - \langle \bar{r}^t, \pi(\hat{r}^t, \hat{d}^t) \rangle \\
&\le \sum_{t=1}^T \langle \hat{r}^t, \pi(\hat{r}^t,\bar{d}^t) \rangle  -\langle {\bar{r}}^t , \pi(\hat{r}^t, \bar{d}^t)\rangle + \langle {\bar{r}}^t , \pi(\hat{r}^t, \bar{d}^t)\rangle - \langle \bar{r}^t, \pi(\hat{r}^t, \hat{d}^t) \rangle \\
&\le \sum_{t=1}^T \langle 2w^t, \pi(\hat{r}^t,\bar{d}^t) \rangle  + \langle {\bar{r}}^t , \pi(\hat{r}^t, \bar{d}^t)\rangle - \langle \bar{r}^t, \pi(\hat{r}^t, \hat{d}^t) \rangle \\
\end{align*}
Here, the first inequality follows from the fact that $\hat{r}^t$ is coordinate-wise larger than $\bar{r}^t$, and that  $\pi(\hat{r}^t,\bar{d}^t)$ is the optimal solution to $LP(\hat{r}^t,\bar{d}^t)$. The second inequality follows from $\bar{r} \in [\tilde{r} - w, \tilde{r} + w] = [\hat{r}-2w, \hat{r}]$. \ar{Actually, should it be an equality? And why the factor of 2?}

Just as in the proof of Theorem \ref{thm:known-regret}, we now partition time into two sets:
$$S_1 = \{t : \exists (i,j) : |\hat{d}^t_{ij} - \bar{d}^t_{ij}| > \epsilon \ \mathrm{and} \ v^t_{ij} = 1\} \ \ \ S_2 = \{t : \forall (i,j) : |\hat{d}^t_{ij} - \bar{d}^t_{ij}| \leq \epsilon \ \mathrm{or} \ v^t_{ij} = 0\}$$
Recall that corollary \ref{de_corollary} bounds $|S_1| \leq  O\left(k^2d^2 \log\left(\frac{d \cdot ||A^\top A||_F }{\epsilon}\right)\right)$. Since the per-step regret of our algorithm can be at most $1$, this means that rounds $t \in S_1$ can contribute in total at most $C \doteq O\left(k^2d^2 \log\left(\frac{d \cdot ||A^\top A||_F }{\epsilon}\right)\right)$ regret. Thus, for the rest of our analysis, we can focus on rounds $t \in S_2$.



Fix any round $t \in S_2$. From Lemma \ref{close_obj_corollary} we have:.
\begin{align*}
\langle \hat{r}, \pi(\hat{r}, \bar{d}) \rangle - \langle \hat{r}, \pi(\hat{r}, \hat{d}) \rangle &\le k^3 \epsilon
\end{align*}
Further manipulations give:
\begin{align*}
\big(\langle \hat{r}, \pi(\hat{r}, \bar{d}) \rangle - \langle \bar{r}, \pi(\hat{r}, \bar{d}) \rangle \big) - \big( \langle \hat{r}, \pi(\hat{r}, \hat{d}) \rangle  - \langle \bar{r}, \pi(\hat{r}, \hat{d}) \rangle \big)  &\le k^3 \epsilon -\langle \bar{r} , \pi(\hat{r}, \bar{d}) \rangle + \langle \bar{r}, \pi(\hat{r}, \hat{d}) \rangle \\
\langle 2w, \pi(\hat{r}, \bar{d})\rangle - \langle 2w, \pi(\hat{r}, \hat{d})\rangle &\le k^3 \epsilon - \langle \bar{r}, \pi(\hat{r}, d) \rangle + \langle \bar{r}, \pi(\hat{r}, \hat{d}) \rangle \\
\langle 2w, \pi(\hat{r}, \bar{d})\rangle  &\le \langle 2w, \pi(\hat{r}, \hat{d})\rangle + k^3 \epsilon - \langle \bar{r}, \pi(\hat{r}, \bar{d}) \rangle + \langle \bar{r}, \pi(\hat{r}, \hat{d}) \rangle
\end{align*}

Now, substituting the above expressions back into our expression for regret:
\begin{align*}
&\mathbf{Regret}(\algo_{full}, T) \\
&\le C + \sum_{t\in S_2} \langle 2w^t, \pi(\hat{r}^t,\bar{d}^t) \rangle  + \langle {\bar{r}}^t , \pi(\hat{r}^t, \bar{d}^t)\rangle - \langle \bar{r}^t_i, \pi(\hat{r}^t, \hat{d}^t) \rangle \\
&\le C + \sum_{t\in S_2}
\langle 2w^t, \pi(\hat{r}^t, \hat{d}^t)\rangle + k^3 \epsilon - \langle \bar{r}^t, \pi(\hat{r}^t, \bar{d}^t) \rangle + \langle \bar{r}^t, \pi(\hat{r}^t, \hat{d}^t) \rangle  
+  \langle {\bar{r}}^t , \pi(\hat{r}^t, \bar{d}^t)\rangle - \langle \bar{r}^t_i, \pi(\hat{r}^t, \hat{d}^t) \rangle \\
&\le C + \sum_{t\in S_2} \langle 2w^t, \pi(\hat{r}^t, \hat{d}^t)\rangle + k^3 \epsilon  \\
&\le C + 2\sum_{t\in S_2} \E_{i \in \pi(\hat{r}^t, \hat{d}^t)}[w^t_i] + k^3 \epsilon  \\
&\le C+ k^3 \epsilon T+2\Bigg(\sqrt{2d \log\big(1+ \frac{T}{d\lambda} \big)} \Big(\sqrt{2dT \log(\frac{1+T/\lambda}{\delta})} + \sqrt{T\lambda} \Big) + \sqrt{2T \log\frac{1}{\delta}}\Bigg)\\
&= O\left(k^2d^2 \log\left(\frac{d \cdot ||A^\top A||_F }{\epsilon}\right)\right) + k^3\epsilon T + O(d\sqrt{T}\log(\frac{T}{\delta}))
\end{align*}

The last inequality holds with probability $1-\delta$ and uses Lemmas \ref{sum_ci} and \ref{ci_converg}, and sets $\lambda = 1$.

\end{proof}

Finally, the bound on the fairness loss is identical to the bound we proved in Theorem \ref{fairloss} (because our algorithm for constructing distance estimates $\hat{d}$ is unchanged). We have:
\begin{theorem}
\label{thm:fairfull}
For any sequence of contexts and any Mahalanobis distance $d(x_1,x_2) = ||Ax_1-Ax_2||_2$:
$$\mathbf{FairnessLoss}(\algo_{full}, T, \epsilon) \leq   O\left(k^2d^2 \log\left(\frac{d \cdot ||A^\top A||_F }{\epsilon}\right)\right)$$
\end{theorem}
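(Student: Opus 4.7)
The plan is to observe that the fairness loss analysis depends only on the mechanism producing the distance estimates $\hat{d}^t$ and providing feedback to the $\DE_{i,j}$ subroutines, and this mechanism is identical in $\algo_{full}$ to the one used in $\algo_{\textrm{known}-\theta}$. The reward estimation procedure (ridge regression versus known $\theta$) affects only which feasible point of the LP is selected within the fairness polytope, not the fairness polytope itself. Hence the proof of Theorem \ref{fairloss} can be reused essentially verbatim.

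The key chain of implications to verify is the following. Suppose $\algo_{full}$ is $\epsilon$-unfair on pair $(i,j)$ at round $t$, so $|\pi^t_i - \pi^t_j| > \bar{d}^t_{i,j} + \epsilon$. Because $\pi^t = \pi(\hat{r}^t, \hat{d}^t)$ is a feasible solution to $LP(\hat{r}^t, \hat{d}^t)$, we have $|\pi^t_i - \pi^t_j| \le \hat{d}^t_{i,j}$, and therefore $\hat{d}^t_{i,j} > \bar{d}^t_{i,j} + \epsilon$. In particular the pair $(i,j)$ lies in the set $S = \cO_d(x^t, \pi^t)$ returned by the fairness oracle, so the algorithm invokes $\DE_{i,j}.feedback(\bot)$ and sets $v^t_{i,j} = 1$. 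Since $\hat{d}^t_{i,j} > \bar{d}^t_{i,j} + \epsilon \ge 0$, this round therefore counts as a valid $\epsilon$-mistake for $\DE_{i,j}$ (and hence an $\epsilon^2$-mistake on the squared-distance prediction, which is what $\DE_{i,j}$ is actually making).

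Given this implication, I would mimic the telescoped calculation in the proof of Theorem \ref{fairloss}: upper bound $\mathbf{FairnessLoss}(\algo_{full}, T, \epsilon)$ by the sum over pairs $(i,j)$ of the indicator that $v^t_{i,j} = 1$ and $|\hat{d}^t_{i,j} - \bar{d}^t_{i,j}| > \epsilon$, then apply Corollary \ref{de_corollary} to obtain the claimed $O(k^2 d^2 \log(d \cdot \|A^\top A\|_F / \epsilon))$ bound. No additional concentration is needed, since the fairness bound holds deterministically once the mistake bound for $\DE_{i,j}$ is in hand.

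The only subtlety, and the place worth double-checking, is that the argument goes through regardless of which reward vector is used to solve the LP: we never use any property of $\bar{r}^t$ or $\hat{r}^t$ above, only the feasibility of $\pi^t$ with respect to $\hat{d}^t$ and the structure of the feedback rule. Since these are unchanged in $\algo_{full}$, there is no real obstacle, and the proof reduces to pointing at Theorem \ref{fairloss} with the remark that its argument is agnostic to how the reward vector passed into the LP is produced.
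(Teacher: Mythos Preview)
Your proposal is correct and matches the paper's approach exactly: the paper simply notes that the fairness loss bound is identical to Theorem~\ref{fairloss} because the distance-estimation and feedback mechanism in $\algo_{full}$ is unchanged from $\algo_{\textrm{known}-\theta}$. You have spelled out the key implication (feasibility of $\pi^t$ in $LP(\hat r^t,\hat d^t)$ forces $\hat d^t_{i,j} > \bar d^t_{i,j}+\epsilon$ whenever the algorithm is $\epsilon$-unfair on $(i,j)$) and correctly observed that nothing in this chain depends on which reward vector is fed to the LP, so the reduction to Corollary~\ref{de_corollary} goes through verbatim.
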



\section{Conclusion and Future Directions}

We have initiated the study of fair sequential decision making
in settings where the notions of payoff and fairness are separate
and may be in tension with each other, and have shown that in a stylized
setting, optimal fair decisions can be efficiently learned \emph{even without direct knowledge of the fairness metric}. A number of
extensions of our framework and results would be interesting to examine. At a high level, the interesting question is: how much can we further relax the information about the fairness metric available to the algorithm? 
For instance, what if the fairness feedback is only partial, identifying
some but not all fairness violations? What if it only indicates whether or not there
were any violations, but does not identify them? What if the feedback is not guaranteed to be exactly consistent with any metric? Or what if the feedback is consistent with \emph{some} distance function, but not one in a known class: for example, what if the 
distance is not exactly Mahalanobis, but is approximately so? In general, it is very interesting to continue to push to close the wide gap between the study of individual fairness notions and the study of group fairness notions. When can we obtain the strong semantics of individual fairness without making correspondingly strong assumptions?

\section*{Acknowledgements}
We thank Steven Wu and Matthew Joseph for helpful discussions at an early stage of this work.
\bibliographystyle{plainnat}
\bibliography{mybib}

\appendix

\section{Generalization to Multiple Actions}\label{multiple_arm}
In the body of the paper, we analyzed the standard contextual bandit setting in which the algorithm must choose a \emph{single} action to take at each round. However, it is often the case that this constraint is artificial and undesirable in settings for which fairness is a concern. Consider, for example, the case of lending: at each round, a bank observes the loan applications of a collection of individuals, and decides whom to grant loans to. Some loans may be profitable and some loans may not be --- so the optimal policy is non-trivial. But there need not be a budget constraint --- the optimal policy may grant loans to as many qualified individuals as there are on a given day. In our framework, this corresponds to letting the algorithm take as many as $k$ actions on a single day. Fortunately, all of our results generalize to this case. The maximum reward per day in this case increases from 1 to $k$, so naturally the regret bound we obtain is also a factor of $k$ larger. In this section, we explain the details of our proof that need to be modified. 

The first step is to consider a modified linear program $LP(a,c)$, which we will write as $LP_m(a,c)$. It simply replaces the simplex constraint that the probabilities of actions sum to 1 with the hypercube constraint that no probability can be greater than $1$:

\begin{equation*}
\begin{aligned}
& \underset{\pi=\{p_1,\ldots,p_k\} }{\text{maximize}}
& & \sum_{i=1}^k p_i a_i \\
& \text{subject to}
& & \vert p_i - p_j \vert \le c_{i,j}, \forall (i,j)\\
&  && 0 \le p_i \le 1, \forall i
\end{aligned}
\end{equation*}

We must also change our definition of regret, because the benchmark we want to compete with is the best fair policy that can make up to $k$ action selections per round. This simply corresponds to comparing to a benchmark which is defined with respect to $LP_m(a,c)$ --- but the form of the regret is unchanged:

\begin{align*}
&\mathbf{Regret}_m(\algo,T) \\
&= \sum_{t=1}^T \sum_{i=1}^k  \bar{r}^t_i \cdot Pr(\text{best fair policy pulls arm $i$ in round $t$}) - \bar{r}^t_i \cdot Pr(\text{$\algo$ pulls arm $i$ in round $t$}) \\
&= \sum_{t=1}^T \langle \bar{r}^t, \pi(\bar{r}^t,\bar{d}^t) \rangle - \langle \bar{r}^t, f^t(h^t, x^t) \rangle
\end{align*}

where $\pi$ is defined exactly as before, except with respect to $LP_m(a,c)$. 

The first observation is that our generalization to multiple arms does not affect our analysis of fairness loss at all, since we are able to bound this without reference to the rewards. That is, we still have that fairness loss is bounded as $$\mathbf{FairnessLoss}(\algo_{full_m}, T, \epsilon) \leq   O\left(k^2d^2 \log\left(\frac{d \cdot ||A^\top A||_F }{\epsilon}\right)\right)$$ 

 As for our regret analysis, certain terms in the regret scale by a factor of $k$.
\begin{restatable}{thm}{multifullregretthm}
 $$ \mathbf{Regret}_m(\algo_{full_m},T) \leq O\left(k^3d^2 \log\left(\frac{d \cdot ||A^\top A||_F }{\epsilon}\right) + k^3\epsilon T + dk\sqrt{k^2T}\log(\frac{kT}{\delta})\right)$$
\end{restatable}
\begin{proof}
There are only two parts of our proof that depend on the structure on the linear program $LP(a,c)$. The first is the proof of Lemma \ref{close_obj}, which uses the fact that if we take a feasible solution to $LP(a,c)$ and reduce its values pointwise, we maintain feasibility --- that is, that the feasible region of $LP(a,c)$ is downward closed. But note that the feasible region of $LP_m(a,c)$ is also downward closed, so the same argument goes through. Recall that our analysis in the known objective case partitions rounds into two sorts: rounds for which we can bound our per-round regret (from Lemma \ref{close_obj}), and a bounded number of rounds in which we cannot.  For those rounds in which we cannot bound the per-round regret, the maximum regret is now $k$ rather than $1$. So, our regret during these rounds increases by a factor of $k$ to $O\left(k^3d^2 \log\left(\frac{d \cdot ||A^\top A||_F }{\epsilon}\right)\right)$.

Therefore, we have that $$\mathbf{Regret}_m(\algo_{full_m},T)  \le O\left(k^3d^2 \log\left(\frac{d \cdot ||A^\top A||_F }{\epsilon}\right)\right) + k^3 \epsilon T + \sum_{t\in S_2} \langle 2w^t, \pi(\hat{r}^t, \hat{d}^t)\rangle $$

where $S_2 = \{t : \forall (i,j) : |\hat{d}^t_{ij} - \bar{d}^t_{ij}| \leq \epsilon \ \mathrm{or} \ v^t_{ij} = 0\}$

Next, we need to consider the final term in this expression. $\langle w^t, \pi(\hat{r}^t, \hat{d}^t)\rangle$ is the expected sum of the confidence interval widths of the arms that are pulled at round $t$. By the same martingale argument as in lemma \ref{ci_converg}, with high probability, the expected sum of the confidence interval widths over time horizon $T$ is close to the  realized sum of the confidence widths of the arms pulled; in this case, the martingale is $$D^t = \sum_{s=1}^t \sum_{i=1}^k  w^s_i \cdot \Pr(\text{arm $i$ is pulled in round $s$}) - \sum_{s=1}^t \sum_{i=1}^k  w^s_i \cdot \mathbbm{1}(\text{arm $i$ is pulled in round $s$})$$

However, in this case, the martingale difference is bounded by at most $k$ instead of 1. Hence, applying the Azuma-Hoeffding inequality gives us that with probability $1-\delta$,
$$ \sum_{t=1}^T \sum_{i=1}^k  w^t_i \cdot \Pr(\text{arm $i$ is pulled in round $t$}) \le \sum_{t=1}^T \sum_{i=1}^k  w^t_i \cdot \mathbbm{1}(\text{arm $i$ is pulled in round $t$}) + \sqrt{2k^2T \log\frac{1}{\delta}}$$

First, note that the confidence interval derived from lemma \ref{ci_lemma} remains valid. Also, $\bar{V}^t = \bar{V}^{t-1} + \sum_{i \in P^t} x^t_i {x^t_i}^\top$. For simplicity in notation, we write $P^t=\{i : \text{arm $i$ is pulled in round $t$}\}$. So we need to bound $\sum_{t=1}^T \sum_{i \in P^t}  w^t_i$.

We can then derive:
\begin{align*}
\sum_{t=1}^T \sum_{i \in P^t}  w^t_i &\le \sum_{t=1}^T \sum_{i \in P^t}  \| x_{i}^t\|_{(\bar{V}^{t-1})^{-1}} \Big( \sqrt{2d \log(\frac{1+t/\lambda}{\delta})} + \sqrt{\lambda}\Big)\\
&\le \sum_{t=1}^T \sum_{i \in P^t}  \| x_{i}^t\|_{(\bar{V}^{t-1})^{-1}} \Big( \sqrt{2d \log(\frac{1+t/\lambda}{\delta})} \Big) + \sum_{t=1}^T \sum_{i \in P^t} \Big(\| x_{i}^t\|_{(\bar{V}^{t-1})^{-1}} \sqrt{\lambda}\Big)\\
&\le \sum_{t=1}^T \sum_{i \in P^t}  \| x_{i}^t\|_{(\bar{V}^{t-1})^{-1}} \cdot \Big( \sqrt{ \sum_{t=1}^T \sum_{i \in P^t}2d \log(\frac{1+t/\lambda}{\delta})} \Big) +  \sqrt{ \sum_{t=1}^T \sum_{i \in P^t}  \lambda}\\
&\le \sum_{t=1}^T \sum_{i \in P^t}  \| x_{i}^t\|_{(\bar{V}^{t-1})^{-1}} \cdot \Big( \sqrt{ 2dkT \log(\frac{1+kT/\lambda}{\delta})} \Big) +  \sqrt{ kT  \lambda}
\end{align*}

For each $i \in [k]$, write $A_i$ to denote the set of rounds that arm $i$ is pulled. $\sum_{t=1}^T \sum_{i \in P^t}  \| x_{i}^t\|_{(\bar{V^t})^{-1}} = \sum_{i=1}^k \sum_{t \in A_i} \| x_{i}^t\|_{(\bar{V^t})^{-1}}$, so for each $i \in [k]$, we'll bound $\sum_{t \in A_i} \| x_{i}^t\|_{(\bar{V^t})^{-1}}$.

\begin{lemma}
\label{ind_arm_ci_converge}
$$\sum_{t \in A_i} \| x_{i}^t\|_{(\bar{V^{t-1}})^{-1}} \le \sqrt{2d \log\big(1+ \frac{kT}{d\lambda} \big)}$$
\end{lemma}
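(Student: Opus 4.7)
The plan is to use the standard elliptical potential / matrix-determinant telescoping argument to control the per-arm sum of \emph{squared} weighted norms, and then to attempt to convert this to a sum of unsquared norms via Cauchy--Schwarz. First I would relate $\bar V^{t-1}$ to a matrix that accumulates only arm $i$'s own past pulls: define $W^{t-1} := \lambda I + \sum_{s \in A_i,\, s < t} x_i^s (x_i^s)^\top$. Since $\bar V^{t-1}$ contains all the rank-one terms in $W^{t-1}$ plus additional PSD contributions from other arms pulled in rounds $s < t$, we have $\bar V^{t-1} \succeq W^{t-1}$ and hence $\|x_i^t\|_{(\bar V^{t-1})^{-1}} \le \|x_i^t\|_{(W^{t-1})^{-1}}$. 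In particular, any upper bound proved for the $W$-norms transfers to the $\bar V$-norms.

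Next I would apply the matrix-determinant lemma at each $t \in A_i$, $\det(W^{t+}) = \det(W^{t-1})\bigl(1 + \|x_i^t\|^2_{(W^{t-1})^{-1}}\bigr)$, telescope the log-determinants over $A_i$, and use $\log(1+z) \ge z/2$ for $z \in [0,1]$ (truncating squared norms at $1$, as already done by the $\min(\cdot, 1)$ clip in the definition of $w_i^t$). Combined with the AM--GM bound on $\log\det(W^{T+1})$, using $\|x_i^t\|_2 \le 1$ and $|A_i| \le kT$, this yields the standard elliptical-potential bound
$$\sum_{t \in A_i} \|x_i^t\|^2_{(\bar V^{t-1})^{-1}} \;\le\; 2d\log\!\Bigl(1 + \tfrac{kT}{d\lambda}\Bigr).$$

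The main obstacle is that the lemma as worded bounds a sum of \emph{unsquared} weighted norms. The only tool I know for passing from squared to unsquared norms is Cauchy--Schwarz, which yields
$$\sum_{t \in A_i} \|x_i^t\|_{(\bar V^{t-1})^{-1}} \;\le\; \sqrt{|A_i|}\cdot \sqrt{2d\log\!\Bigl(1+\tfrac{kT}{d\lambda}\Bigr)},$$
i.e., an unavoidable extra $\sqrt{|A_i|}$ factor. A one-parameter sanity check ($d=1$, $\lambda = 1$, unit constant contexts, arm $i$ pulled at every round) gives $\|x_i^t\|_{(\bar V^{t-1})^{-1}} \asymp 1/\sqrt{t}$, so the left-hand side is $\Theta(\sqrt{|A_i|})$ while the claimed right-hand side is $O(\sqrt{\log T})$; no bound independent of $|A_i|$ can hold. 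I therefore believe the statement is a typographic slip, and the intended lemma is either the squared-norm version displayed above (which the elliptical-potential argument proves directly) or its Cauchy--Schwarz consequence carrying a $\sqrt{|A_i|}$ factor.

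Either repaired form suffices for the downstream analysis: summing over arms with $\sum_i |A_i| \le kT$ and another application of Cauchy--Schwarz gives $\sum_i \sqrt{|A_i|} \le \sqrt{k \cdot kT} = k\sqrt{T}$, which, when multiplied by the $\sqrt{2dkT\log((1+kT/\lambda)/\delta)}$ factor from the confidence-width bookkeeping, is absorbed into the $dk\sqrt{k^2 T}\log(kT/\delta)$ term in the stated multi-arm regret bound $\mathbf{Regret}_m(\algo_{full_m},T)$. Thus the overall regret conclusion is unaffected; only this intermediate lemma needs to be restated with the missing $\sqrt{|A_i|}$ (or with squared norms on the left).
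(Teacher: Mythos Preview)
Your diagnosis is correct: the lemma as stated is false, and your one-dimensional sanity check ($d=1$, $\lambda=1$, unit contexts pulled every round) is a valid counterexample, giving a left-hand side of order $\sqrt{|A_i|}$ against a right-hand side of order $\sqrt{\log T}$. The paper's own proof makes exactly the slip you suspected: it invokes ``Lemma~11 in \cite{Abbasi-YadkoriPS11}'' to claim $\sum_{a=1}^N \|x(a)\|_{(\bar V(a-1))^{-1}} \le \sqrt{2d\log(1+N/(d\lambda))}$, but that lemma actually bounds $\sum_{a} \min\bigl(1,\|x(a)\|^2_{(\bar V(a-1))^{-1}}\bigr) \le 2d\log(1+N/(d\lambda))$. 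So the paper's proof establishes only the squared-norm version you wrote down, and the missing $\sqrt{|A_i|}$ (or equivalently, passing through Cauchy--Schwarz) is indeed needed.

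On the approach: the paper does not introduce your per-arm comparison matrix $W^{t-1}$. Instead it constructs a global ordering over \emph{all} pulls (across all arms) so that $\bar V(a)=\bar V(a-1)+x(a)x(a)^\top$ is a one-step update, arranging arm $i$ first within each round so that the relevant terms satisfy $\bar V(a-1)=\bar V^{t-1}$; it then bounds the sub-sum over $A_i$ by the full sum over $N\le kT$ indices. Your route---$\bar V^{t-1}\succeq W^{t-1}$ followed by the elliptical-potential telescoping on $W$---is equally valid and in fact yields the slightly sharper constant $2d\log(1+|A_i|/(d\lambda))$ in place of $2d\log(1+kT/(d\lambda))$. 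Either way one arrives at the same corrected statement, and your downstream bookkeeping ($\sum_i\sqrt{|A_i|}\le k\sqrt{T}$ via Cauchy--Schwarz, absorbed into the $dk\sqrt{k^2T}\log(kT/\delta)$ term) matches what the paper needs for its multi-arm regret bound.
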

\begin{proof}
 We'll iterate each $\| x_{i}^t\|_{(\bar{V^{t-1}})^{-1}}$ first over round $t=1, \hdots, T$ and then $j \in P^t$ where the order of $P^t$ has its very first element as $\| x_{i}^t\|$ and the rest is arbitrary. Let's call this indexing $a$. First, we have that $\bar{V}(a) = \bar{V}(a-1) + x(a) {x(a)}^\top$. More importantly, because of the way we chose to index, for each $t \in A_i$ and index $a$ that corresponds to $(i,t)$, $\| x_{i}^t\|_{(\bar{V}^{t-1})^{-1}} = \| x(a)\|_{(\bar{V}(a-1))^{-1}}$

From Lemma 11 in \cite{Abbasi-YadkoriPS11} we have $\sum_{a=1}^N \| x(a)\|_{(\bar{V}(a-1))^{-1}} \le \sqrt{2d \log\big(1+ \frac{N}{d\lambda} \big)}$, where $N \le kT$.

Therefore, we have that $$ \sum_{t \in A_i} \| x_{i}^t\|_{(\bar{V^{t-1}})^{-1}} \le \sum_{a=1}^N \| x(a)\|_{(\bar{V}(a-1))^{-1}} \le \sqrt{2d \log\big(1+ \frac{kT}{d\lambda} \big)}$$
\end{proof}

Applying the above lemma for each arm $i \in [k]$, we have

\begin{align*}
\sum_{t=1}^T \sum_{i \in P^t}  w^t_i &\le \sum_{t=1}^T \sum_{i \in P^t}  \| x_{i}^t\|_{(\bar{V^{t-1}})^{-1}} \cdot \Big( \sqrt{ 2dkT \log(\frac{1+kT/\lambda}{\delta})} \Big) +  \sqrt{ kT  \lambda} \\
&\le k\sqrt{2d \log\big(1+ \frac{kT}{d\lambda} \big)} \cdot \Big( \sqrt{ 2dkT \log(\frac{1+kT/\lambda}{\delta})} \Big) +  \sqrt{ kT  \lambda}
\end{align*}

\end{proof}



\end{document}